\DeclareMathOperator*{\E}{\operatorname{\mathbb{E}}}
\pgfplotsset{compat=1.13}
\setlist[itemize]{leftmargin=5mm}
\def\be{\begin{equation}}
\def\ee{\end{equation}}
\def\lto{\longrightarrow}
\numberwithin{equation}{section}
\theoremstyle{plain}
\newtheorem{theorem}{Theorem}
\newtheorem{lemma}[theorem]{Lemma}
\newtheorem{example}[theorem]{Example}
\newtheorem{remark}[theorem]{Remark}
\theoremstyle{definition}
\newcommand*\samethanks[1][\value{footnote}]{\footnotemark[#1]}
\author{Daniel Murfet\thanks{Equal contribution}}
\author{Susan Wei\samethanks}
\author{Mingming Gong}
\author{Hui Li}
\author{Jesse Gell-Redman}
\author{Thomas Quella}
\affil{School of Mathematics and Statistics \\ University of Melbourne \\ Melbourne, Australia}
\title{Deep Learning is Singular, and That's Good}
\begin{document}

\maketitle

\begin{abstract}
	In singular models, the optimal set of parameters forms an analytic set with singularities and classical statistical inference cannot be applied to such models. This is significant for deep learning as neural networks are singular and thus ``dividing" by the determinant of the Hessian or employing the Laplace approximation are not appropriate. Despite its potential for addressing fundamental issues in deep learning, singular learning theory appears to have made little inroads into the developing canon of deep learning theory. Via a mix of theory and experiment, we present an invitation to singular learning theory as a vehicle for understanding deep learning and suggest important future work to make singular learning theory directly applicable to how deep learning is performed in practice. 
\end{abstract}

\section{Introduction}

It has been understood for close to twenty years that neural networks are singular statistical models \citep{amari_learning_2003, watanabe_almost_2007}. This means, in particular, that the set of network weights equivalent to the true model under the Kullback-Leibler divergence forms a real analytic variety which fails to be an analytic manifold due to the presence of singularities. It has been shown by Sumio Watanabe that the geometry of these singularities controls quantities of interest in statistical learning theory, e.g., the generalisation error. Singular learning theory \citep{watanabe_algebraic_2009} is the study of singular models and requires very different tools from the study of regular statistical models. The breadth of knowledge demanded by singular learning theory -- Bayesian statistics, empirical processes and algebraic geometry -- is rewarded with profound and surprising results which reveal that singular models are different from regular models in practically important ways.
To illustrate the relevance of singular learning theory to deep learning, each section of this paper illustrates a key takeaway idea\footnote{\textbf{Source code} is available at \href{https://github.com/susanwe/RLCT}{\texttt{https://github.com/susanwe/RLCT}}.}. 

\paragraph{The real log canonical threshold (RLCT) is the correct way to count the effective number of parameters in a deep neural network (DNN) (Section \ref{section:no_flat_minima}). } 
To every (model, truth, prior) triplet is associated a birational invariant known as the real log canonical threshold. The RLCT can be understood in simple cases as half the number of normal directions to the set of true parameters. We will explain why this matters more than the curvature of those directions (as measured for example by eigenvalues of the Hessian) laying bare some of the confusion over ``flat'' minima. 

\paragraph{For singular models, the Bayes predictive distribution is superior to MAP and MLE (Section \ref{section:gen_error}). } In regular statistical models,  the 1) Bayes predictive distribution, 2) maximum {a posteriori} (MAP) estimator, and 3) maximum likelihood estimator (MLE) have asymptotically equivalent generalisation error (as measured by the Kullback-Leibler divergence). This is not so in singular models. We illustrate in our experiments that even ``being Bayesian'' in just the final layers improves generalisation over MAP. Our experiments further confirm that the Laplace approximation of the predictive distribution \citet{smith2017bayesian,zhang_energyentropy_2018} is not only theoretically inappropriate but performs poorly.

\paragraph{Simpler true distribution means lower RLCT (Section \ref{section:simple_func}).}  In singular models the RLCT depends on the (model, truth, prior) triplet whereas in regular models it depends only on the (model, prior) pair. The RLCT increases as the complexity of the true distribution relative to the supposed model increases. We verify this experimentally with a simple family of ReLU and SiLU networks. 

\section{Related work}
In classical learning theory, generalisation is explained by measures of capacity such as the $l_2$ norm, Radamacher complexity, and VC dimension \citep{bousquet2003introduction}. It has become clear however that these measures cannot capture the empirical success of DNNs \citep{zhang_understanding_2017}. 
For instance, over-parameterised neural networks can easily fit random labels \citep{zhang_understanding_2017,du2018gradient,allen2019convergence} indicating that complexity measures such as Rademacher complexity are very large.
There is also a slate of work on generalisation bounds in deep learning. Uniform convergence bounds \citep{neyshabur2015norm,bartlett2017spectrally,neyshabur2019towards,arora2018stronger} usually cannot provide non-vacuous bounds.
Data-dependent bounds \citep{brutzkus2018sgd,li2018learning,allen2019learning} consider the ``classifiability'' of the data distribution in generalisation analysis of neural networks.
Algorithm-dependent bounds \citep{daniely2017sgd,arora2019fine,yehudai2019power,cao2019generalization} consider the relation of Gaussian initialisation and the training dynamics of (stochastic) gradient descent to kernel methods \citep{jacot2018neural}.

In contrast to many of the aforementioned works, we are interested in estimating the conditional \textit{distribution} $q(y|x)$. Specifically, we measure the generalisation error of some estimate $\hat q_n(y|x)$ in terms of the Kullback-Leibler divergence between $q$ and $\hat q_n$, see (\ref{eq:Gn}). The next section gives a crash course on singular learning theory. The rest of the paper illustrates the key ideas listed in the introduction. Since we cover much ground in this short note, we will review other relevant work along the way, in particular literature on ``flatness", the Laplace approximation in deep learning, etc. 

\section{Singular Learning Theory}
To understand why classical measures of capacity fail to say anything meaningful about DNNs, it is important to distinguish between two different types of statistical models. Recall we are interested in estimating the true (and unknown) conditional distribution $q(y|x)$ with a class of models $\{p(y|x,w): w \in W\}$ where $W \subset \mathbb R^d$ is the parameter space. We say the model is \textit{identifiable} if the mapping $w \mapsto p(y|x,w)$ is one-to-one. Let $q(x)$ be the distribution of $x$. The Fisher information matrix associated with the model $\{p(y|x,w): w \in W \}$ is the matrix-valued function on $W$ defined by
\begin{equation*}
	I(w)_{ij} = \int\!\int \frac{\partial}{\partial w_i}[ \log p(y|x,w) ] \frac{\partial}{\partial w_j}[ \log p(y|x,w) ] q(y|x) q(x) dx dy,
	\label{eq:FIM}
\end{equation*}
if this integral is finite. 
Following the conventions in \citet{watanabe_algebraic_2009}, we have the following bifurcation of statistical models.
A statistical model $p(y|x,w)$ is called \textbf{regular} if it is 1) identifiable and 2) has positive-definite Fisher information matrix. A statistical model is called \textbf{strictly singular} if it is not regular. 

Let  $\varphi(w)$ be a prior on the model parameters $w$.
To every (model, truth, prior) triplet, we can associate the zeta function,
$
\zeta(z) = \int K(w)^z \varphi(w) \,dw, z \in \mathbb C,
$
where $K(w)$ is the Kullback-Leibler (KL) divergence between the model $p(y|x,w)$ and the true distribution $q(y|x)$:
\begin{equation}
	K(w) := \int \!\int q(y|x) \log \frac{ q(y|x) }{ p(y|x,w) } q(x) \,dx \,dy.
	\label{eq:KL}
\end{equation}
For a (model, truth, prior) triplet $(p(y|x,w),q(y|x),\varphi)$, let $-\lambda$ be the maximum pole of the corresponding zeta function. We call $\lambda$ the \textbf{real log canonical threshold} (RLCT) \citep{watanabe_algebraic_2009} of the (model, truth, prior) triplet. The RLCT is the central quantity of singular learning theory. 

By {\citet[Theorem 6.4]{watanabe_algebraic_2009}} the RLCT is equal to $d/2$ in regular statistical models and bounded above by $d/2$ in strictly singular models if \textit{realisability} holds: let 
$$
W_0 = \{w \in W: p(y|x,w)=q(y|x)\}
$$
be the set of true parameters,
we say $q(y|x)$ is \textbf{realisable} by the model class if $W_0$ is non-empty.
The condition of realisability is critical to standard results in singular learning theory. Modifications to the theory are needed in the case that $q(y|x)$ is not realisable, see the condition called relatively finite variance in \citet{watanabe_mathematical_2018}.

\paragraph{Neural networks in singular learning theory.} Let $W \subseteq \mathbb{R}^d$ be the space of weights of a neural network of some fixed architecture, and let $f(x,w): \mathbb{R}^N \times W \lto \mathbb{R}^M$ be the associated function. We shall focus on the regression task and study the model
\begin{equation}
	p(y|x,w) = \frac{1}{(2 \pi)^{M/2}} \exp\Big(-\tfrac{1}{2} \| y - f(x,w) \|^2 \Big)
	\label{eq:gaussian_model_in_w}
\end{equation}
but singular learning theory can also apply to classification, for instance. 
It is routine to check (see Appendix \ref{appendix:nn_singular}) that for feedforward ReLU networks not only is the model strictly singular but the matrix $I(w)$ is degenerate for all nontrivial weight vectors and the Hessian of $K(w)$ is degenerate at every point of $W_0$.

\paragraph{RLCT plays an important role in model selection.}
One of the most accessible results in singular learning theory is the work related to the widely-applicable Bayesian information criterion (WBIC) \citet{watanabe_widely_2013}, which we briefly review here for completeness.
Let $\mathcal D_n =  \{(x_i,y_i)\}_{i=1}^n$ be a dataset of input-output pairs.  
Let $L_n(w)$ be the negative log likelihood
\begin{equation}
	L_n(w) = -\frac{1}{n} \sum_{i=1}^n \log p(y_i |x_i, w)
	\label{eq:nll}
\end{equation}
and $p(\mathcal D_n | w) = \exp( -n L_n(w)).$
The marginal likelihood of a model $\{p(y|x,w): w \in W\}$ is given by
$
p(\mathcal D_n) = \int_W p(\mathcal D_n|w) \varphi(w) \,dw
$
and can be loosely interpreted as the evidence for the model. Between two models, we should prefer the one with higher model evidence. However, since the marginal likelihood is an intractable integral over the parameter space of the model, one needs to consider some approximation.

The well-known Bayesian Information Criterion (BIC) derives from an asymptotic approximation of $-\log p(\mathcal D_n)$ using the Laplace approximation, leading to
$
\operatorname{BIC} = nL_n( w_{\operatorname{MLE}}) + \frac{d}{2} \log n.
$
Since we want the marginal likelihood of the data for some given model to be high one should almost never adopt a DNN according to the BIC, since in such models $d$ may be very large. However, this argument contains a serious mathematical error: the Laplace approximation used to derive BIC only applies to \emph{regular} statistical models, and DNNs are not regular. 
The correct criterion for both regular and strictly singular models was shown in \citet{watanabe_widely_2013} to be $nL_n(w_0) + \lambda \log n$ where $w_0 \in W_0$ and $\lambda$ is the RLCT. Since DNNs are highly singular $\lambda$ may be much smaller than $d/2$ (Section \ref{section:simple_func}) it is possible for DNNs to have high marginal likelihood -- consistent with their empirical success.

\section{Volume dimension, effective degrees of freedom, and flatness}
\label{section:no_flat_minima}

\paragraph{Volume codimension.} The easiest way to understand the RLCT is as a volume codimension \citep[Theorem 7.1]{watanabe_algebraic_2009}.  Suppose that $W \subseteq \mathbb{R}^d$ and $W_0$ is nonempty, i.e., the true distribution is realisable. We consider a special case in which the KL divergence in a neighborhood of every point $v_0 \in W_0$ has an expression in local coordinates of the form
\begin{equation}\label{eq:local_Kw}
	K(w) = \sum_{i=1}^{d'} c_i w_i^2,
\end{equation} 
where the coefficients $c_1,\ldots,c_{d'} > 0$ may depend on $v_0$ and $d'$ may be strictly less than $d$. If the model is regular then this is true with $d = d'$ and if it holds for $d' < d$ then we say that the pair $(p(y|x,w),q(y|x))$ is \emph{minimally singular}. It follows that the set $W_0 \subseteq W$ of true parameters is a regular submanifold of codimension $d'$ (that is, $W_0$ is a manifold of dimension $d - d'$ where $W$ has dimension $d$). Under this hypothesis there are, near each true parameter $v_0 \in W_0$, exactly $d - d'$ directions in which $v_0$ can be varied without changing the model $p(y|x,w)$ and $d'$ directions in which varying the parameters does change the model. In this sense, there are $d'$ \emph{effective parameters} near $v_0$. 

This number of effective parameters can be computed by an integral. Consider the volume of the set of almost true parameters
$
V(t,v_0) = \int_{K(w) < t} \varphi(w) dw
$
where the integral is restricted to a small closed ball around $v_0$. As long as the prior $\varphi(w)$ is non-zero on $W_0$ it does not affect the relevant features of the volume, so we may assume $\varphi$ is constant on the region of integration in the first $d'$ directions and normal in the remaining directions, so up to a constant depending only on $d'$ we have
\begin{equation}\label{eq:volume_singular}
	V(t,v_0) \propto  \frac{t^{d'/2}}{\sqrt{c_1 \cdots c_{d'}}}
\end{equation}
and we can extract the exponent of $t$ in this volume in the limit
\be\label{eq:limit_volumedim}
d' = 2 \lim_{t \to 0} \frac{\log\big\{V(at,v_0)/V(t,v_0)\big\}}{\log(a)}
\ee
for any $a > 0$, $a \neq 1$. We refer to the right hand side of (\ref{eq:limit_volumedim}) as the \emph{volume codimension} at $v_0$. 

The function $K(w)$ has the special form (\ref{eq:local_Kw}) locally with $d' = d$ if the statistical model is regular (and realisable) and with $d' < d$ in some singular models such as reduced rank regression (Appendix \ref{appendix:reducedrank}). While such a local form does not exist for a singular model generally (in particular for neural networks) nonetheless under natural conditions \citep[Theorem 7.1]{watanabe_algebraic_2009} we have
$
V(t,v_0) = c t^\lambda + o( t^\lambda)
$
where $c$ is a constant. We assume that in a sufficiently small neighborhood of $v_0$ the point RLCT $\lambda$ at $v_0$ \citep[Definition 2.7]{watanabe_algebraic_2009} is less than or equal to the RLCT at every point in the neighborhood so that the multiplicity $m = 1$, see Section 7.6 of \citep{watanabe_algebraic_2009} for relevant discussion.
It follows that the limit on the right hand side of  (\ref{eq:limit_volumedim}) exists and is equal to $\lambda$. In particular $\lambda = d'/2$ in the minimally singular case.

Note that for strictly singular models such as DNNs $2 \lambda$ may not be an integer. This may be disconcerting but the connection between the RLCT, generalisation error and volume dimension strongly suggests that $2 \lambda$ is nonetheless the only geometrically meaningful ``count'' of the effective number of parameters near $v_0$. 

\paragraph{RLCT and likelihood vs temperature.}
Again working with the model in (\ref{eq:gaussian_model_in_w}), consider the expectation over the posterior at temperature $T$ as defined in (\ref{general_expectation_posterior}) of the negative log likelihood (\ref{eq:nll})
$$
E(T) = \mathbb{E}^{1/T}_w\big[nL_n(w) \big] = \mathbb{E}_w^{1/T}\Big[ \tfrac{1}{2} \sum_{i=1}^n \| y_i - f(x_i, w) \|^2 \Big] + \frac{nM}{2} \log(2\pi)\,.
$$
Note that when $n$ is large $L_n(v_0) \approx \frac{M}{2} \log(2\pi)$ for any $v_0 \in W_0$ so for $T \approx 0$ the posterior concentrates around the set $W_0$ of true parameters and $E(T) \approx \frac{nM}{2} \log(2\pi)$. Consider the increase $\Delta E = E(T + \Delta T) - E(T)$ corresponding to an increase in temperature $\Delta T$. It can be shown that 
$
\Delta E \approx \lambda \Delta T
$
where the reader should see \citep[Corollary 3]{watanabe_widely_2013} for a precise statement. As the temperature increases, samples taken from the tempered posterior are more distant from $W_0$ and the error $E$ will increase. If $\lambda$ is smaller then for a given increase in temperature the quantity $E$ increases less: this is one way to understand intuitively why a model with smaller RLCT generalises better from the dataset $D_n$ to the true distribution.

\paragraph{Flatness.} It is folklore in the deep learning community that flatness of minima is related to generalisation \citep{hinton_keeping_1993, hochreiter1997flat} and this claim has been revisited in recent years \citep{chaudhari2019entropy, smith2017bayesian, jastrzkebski2017three, zhang_energyentropy_2018}. In regular models this can be justified using the lower order terms of the asymptotic expansion of the Bayes free energy \citep[\S 3.1]{Balasubramanian:1996cond.mat..1030B} but the argument breaks down in strictly singular models, since for example the Laplace approximation of \citet{zhang_energyentropy_2018} is invalid. The point can be understood via an analysis of the version of the idea in \citep{hochreiter1997flat}. Their measure of entropy compares the volume of the set of parameters with tolerable error $t_0$ (our almost true parameters) to a standard volume
\begin{equation}\label{eq:entropy}
	- \log\Big[\frac{V(t_0,v_0)}{t_0^{d/2}}\Big] = \frac{d-d'}{2} \log(t_0) + \tfrac{1}{2} \sum_{i=1}^{d} \log c_i\,.
\end{equation}
Hence in the case $d = d'$ the quantity $-\tfrac{1}{2} \sum_i \log(c_i)$ is a measure of the entropy of the set of true parameters near $w_0$, a point made for example in \citet{zhang_energyentropy_2018}. However when $d' < d$ this conception of entropy is inappropriate because of the $d - d'$ directions in which $K(w)$ is flat near $v_0$, which introduce the $t_0$ dependence in (\ref{eq:entropy}). 

\section{Generalisation}\label{section:gen_error}
The generalisation puzzle \citep{DBLP:journals/corr/abs-1801-00173} is one of the central mysteries of deep learning. Theoretical investigations into the matter is an active area of research \citet{neyshabur_exploring_2017}. Many of the recent proposals of capacity measures for neural networks are based on the eigenspectrum of the (degenerate) Hessian, e.g., \citet{thomas_information_2019, maddox_rethinking_2020}. But this is not appropriate for singular models, and hence for DNNs.

Since we are interested in learning the \textit{distribution}, our notion of generalisation is slightly different, being measured by the KL divergence.
Precise statements regarding the generalisation behavior in singular models can be made using singular learning theory.
Let the network weights be denoted $\theta$ rather than $w$ for reasons that will become clear. Recall in the Bayesian paradigm, prediction proceeds via the so-called Bayes predictive distribution,
$
p(y|x, \mathcal D_n) = \int p(y|x,\theta) p(\theta|\mathcal D_n) \,d\theta.
$
More commonly encountered in deep learning practice are the MAP and MLE point estimators.
While in a regular statistical model, the three estimators 1) Bayes predictive distribution, 2) MAP, and 3) MLE have the \textit{same} leading term in their asymptotic generalisation behavior, the same is not true in singular models.
More precisely, let $\hat q_n(y|x)$ be some estimate of the true unknown conditional density $q(y|x)$ based on the dataset $\mathcal D_n$. The generalisation error of the predictor $\hat q_n(y|x)$ is
\begin{equation}
	G(n) := KL (q(y|x) || \hat q_n(y|x) ) = \int \!\int q(y|x) \log \frac{q(y|x)}{\hat q_n(y|x)} q(x) \,dy  \,dx.
	\label{eq:Gn}
\end{equation}
To account for sampling variability, we will work with the \textit{average generalisation error}, ${\E}_n G(n)$, where ${\E}_n$ denotes expectation over the dataset $\mathcal D_n$.
By {\citet[Theorem 1.2 and Theorem 7.2]{watanabe_algebraic_2009}}, we have
\begin{equation}
	{\E}_n G(n) = \lambda/n + o(1/n)  \text{ if $\hat q_n$ is the Bayes predictive distribution,}
	\label{eq:Bayes_generalisation}
\end{equation}
where $\lambda$ is the RLCT corresponding to the triplet $( p(y|x,\theta), q(y|x), \varphi(\theta) )$. In contrast, we should note that \citet{zhang_energyentropy_2018} and  \citet{smith2017bayesian} rely on the Laplace approximation to explain the generalisation of the Bayes predictive distribution though both works acknowledge the Laplace approximate is inappropriate. For completeness, a quick sketch of the derivation of (\ref{eq:Bayes_generalisation}) is provided in Appendix \ref{appendix:generalisation_theory}.
Now by {\cite[Theorem 6.4]{watanabe_algebraic_2009}} we have
\begin{equation}
	{\E}_n G(n) = C/n + o(1/n)   \text{ if $\hat q_n$ is the MAP or MLE,}
	\label{eq:map_generalisation}
\end{equation}
where $C$ (different for MAP and MLE) is the maximum of some Gaussian process. For regular models, the MAP, MLE, and the Bayes predictive distribution have the same leading term for ${\E}_n G(n)$ since $\lambda = C = d/2$. However in singular models, $C$ is generally greater than $\lambda$, meaning we should prefer the Bayes predictive distribution for singular models.

That the RLCT has such a simple relationship to the Bayesian generalisation error is remarkable. On the other hand, the practical implications of (\ref{eq:bayesgenerr}) are limited since the Bayes predictive distribution is intractable. While approximations to the Bayesian predictive distribution, say via variational inference, might inherit a similar relationship between generalisation and the (variational) RLCT, serious theoretical developments will be required to rigorously establish this. The challenge comes from the fact that for approximate Bayesian predictive distributions, the free energy and generalisation error may have different learning coefficients $\lambda$. This was well documented in the case of a neural network with one hidden layer \citep{nakajima_variational_2007}.  

We set out to investigate whether certain very simple approximations of the Bayes predictive distribution can already demonstrate superiority over point estimators. 
Suppose the input-target relationship is modeled as in (\ref{eq:gaussian_model_in_w}) but we write $\theta$ instead of $w$.
We set $q(x) = N(0,I_3)$. 
For now consider the realisable case, $q(y|x) = p(y|x,\theta_0)$ where $\theta_0$ is drawn randomly according to the default initialisation in PyTorch when model (\ref{eq:gaussian_model_in_w}) is instantiated. We calculate $\E_n G(n)$ using multiple datasets $\mathcal D_n$ and a large testing set, see Appendix \ref{appendix:generalizaton} for more details. 

\begin{figure}[h!]
	\begin{center}
		\includegraphics[scale=0.45]{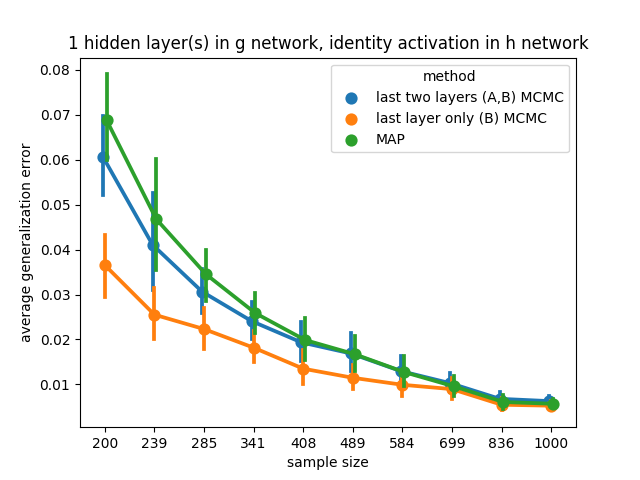}
		\includegraphics[scale=0.45]{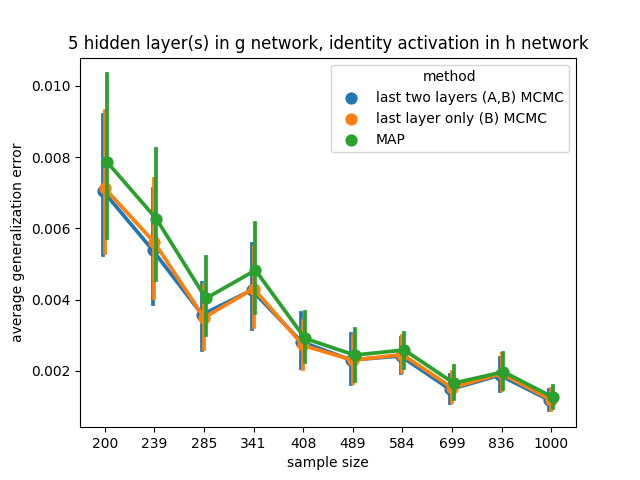}
		\includegraphics[scale=0.45]{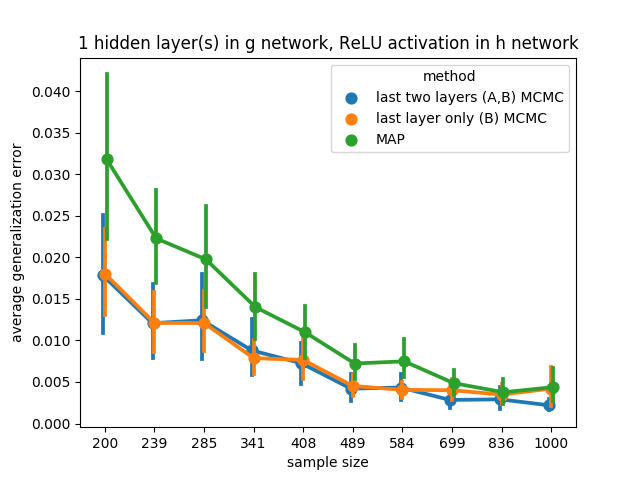}
		\includegraphics[scale=0.45]{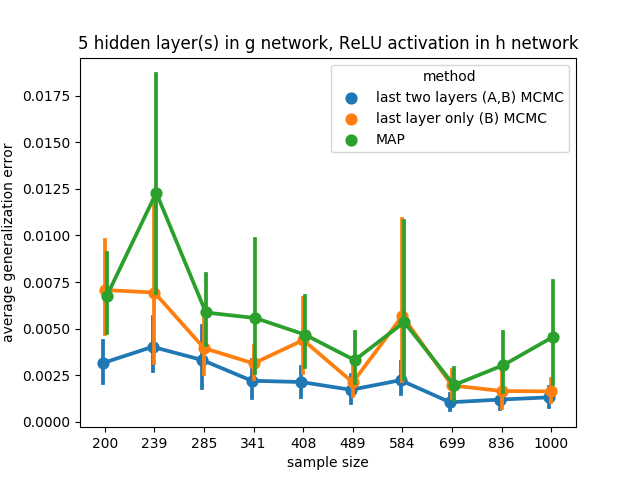}
	\end{center}
	\caption{\textit{Realisable and full batch gradient descent for MAP.} Average generalisation errors $\E_n G(n)$ are displayed for various approximations of the Bayes predictive distribution. The results of the Laplace approximations are reported in the Appendix and not displayed here because they are higher than other approximation schemes by at least an order of magnitude. Each subplot shows a different combination of hidden layers in $g$ (1 or 5) and activation function in $h$ (ReLU or identity). Note that the y-axis is not shared.
	}
	\label{fig:avg_gen_err_fullbatch_realisable}
\end{figure}

Since $f$ is a hierarchical model, let's write it as $f_\theta(\cdot) = h(g(\cdot;v);w)$ with the dimension of $w$ being relatively small. Let $\theta_{\operatorname{MAP}} = (v_{\operatorname{MAP}}, w_{\operatorname{MAP}})$ be the MAP estimate for $\theta$ using batch gradient descent. The idea of our simple approximate Bayesian scheme is to freeze the network weights at the MAP estimate for early layers and perform approximate Bayesian inference for the final layers\footnote{This is similar in spirit to \citet{kristiadi_being_2020} who claim that even ``being Bayesian a little bit" fixes overconfidence. They approach this via the Laplace approximation for the final layer of a ReLU network. It is also worth noting that \citet{kristiadi_being_2020} do not attempt to formalise what it means to "fix overconfidence"; the precise statement should be in terms of $G(n)$.}. e.g., freeze the parameters of $g$ at $v_{\operatorname{MAP}}$ and perform MCMC over $w$. 
Throughout the experiments, $g: \mathbb R^3 \to \mathbb R^3$ is a feedforward ReLU block with each hidden layer having 5 hidden units and $h: \mathbb R^3 \to \mathbb R^3$ is either $BAx$ or $B \operatorname{ReLU}(Ax)$ where $A \in \mathbb R^{3 \times r}, B \in \mathbb R^{r \times 3}$. We set $r=3$. We shall consider 1 or 5 hidden layers for $g$. 

\begin{table}[h!]%
	\caption{Companion to Figure \ref{fig:avg_gen_err_fullbatch_realisable}. The learning coefficient is the slope of the linear fit $1/n$ versus $\E_n G(n)$ (no intercept since realisable). The $R^2$ value gives a sense of the goodness-of-fit.}
	\label{table:avg_gen_err_fullbatch_realisable}
	\begin{tiny}
		\begin{subtable}[t]{.5\linewidth}
			\caption{1 hidden layer(s) in $g$, identity activation in $h$}			\begin{tabular}{lrr}
\toprule
                        method &  learning coefficient &  R squared \\
\midrule
    last two layers (A,B) MCMC &              9.709027 &   0.966124 \\
      last layer only (B) MCMC &              6.410380 &   0.988921 \\
 last two layers (A,B) Laplace &                   inf &        NaN \\
   last layer only (B) Laplace &           2154.989266 &   0.801077 \\
                           MAP &             10.714216 &   0.951051 \\
\bottomrule
\end{tabular}

		\end{subtable}
	\hspace{2em}
		\begin{subtable}[t]{.5\linewidth}
			\caption{5 hidden layer(s) in $g$, identity activation in $h$}
			\begin{tabular}{lrr}
\toprule
                        method &  learning coefficient &  R squared \\
\midrule
    last two layers (A,B) MCMC &              1.286290 &   0.985161 \\
      last layer only (B) MCMC &              1.298504 &   0.982298 \\
 last two layers (A,B) Laplace &                   inf &        NaN \\
   last layer only (B) Laplace &           2038.605589 &   0.803736 \\
                           MAP &              1.437473 &   0.983411 \\
\bottomrule
\end{tabular}

		\end{subtable}
		\begin{subtable}[t]{.5\linewidth}
			\caption{1 hidden layer(s) in $g$, ReLU activation in $h$}
			\begin{tabular}{lrr}
\toprule
                        method &  learning coefficient &  R squared \\
\midrule
    last two layers (A,B) MCMC &              3.117187 &   0.977313 \\
      last layer only (B) MCMC &              3.152710 &   0.980132 \\
 last two layers (A,B) Laplace &                   inf &        NaN \\
   last layer only (B) Laplace &           1120.648298 &   0.742412 \\
                           MAP &              5.343311 &   0.972212 \\
\bottomrule
\end{tabular}

		\end{subtable}
	\hspace{2em}
		\begin{subtable}[t]{.5\linewidth}
			\caption{5 hidden layer(s) in $g$, ReLU activation in $h$}			\begin{tabular}{lrr}
\toprule
                        method &  learning coefficient &  R squared \\
\midrule
    last two layers (A,B) MCMC &              0.835593 &   0.957824 \\
      last layer only (B) MCMC &              1.466273 &   0.920716 \\
 last two layers (A,B) Laplace &                   inf &        NaN \\
   last layer only (B) Laplace &           1416.294288 &   0.808991 \\
                           MAP &              1.981483 &   0.889519 \\
\bottomrule
\end{tabular}

		\end{subtable}
	\end{tiny}
\end{table}

To approximate the Bayes predictive distribution, we perform either the Laplace approximation or the NUTS variant of HMC \citep{hoffman2014no} in the last two layers, i.e., performing inference over $A,B$ in
$h(g(\cdot;v_{\operatorname{MAP}});A,B).$
Note that MCMC is operating in a space of 18 dimensions in this case, which is small enough for us to expect MCMC to perform well.
We also implemented the Laplace approximation and NUTS in the last layer only, i.e. performing inference over $B$ in
$h_2(h_1(g(\cdot;v_{\operatorname{MAP}});A_{\operatorname{MAP}}); B).$
Further implementation details of these approximate Bayesian schemes are found in Appendix \ref{appendix:generalizaton}.

From the outset, we expect the Laplace approximation over $w = (A, B)$ to be invalid since the model is singular. We do however expect the last-layer-only Laplace approximation over $B$ to be sound. Next, we expect the MCMC approximation in either the last layer or last two layers to be  superior to the Laplace approximations and to the MAP. We further expect the last-two-layers MCMC to have better generalisation than the last-layer-only MCMC since the former is closer to the Bayes predictive distribution. In summary, we anticipate the following performance order for these five approximate Bayesian schemes (from worst to best): last-two-layers Laplace, last-layer-only Laplace, MAP, last-layer-only MCMC, last-two-layers MCMC.

The results displayed in Figure \ref{fig:avg_gen_err_fullbatch_realisable} are in line with our stated expectations above, \textit{except} for the surprise that the last-layer-only MCMC approximation is often superior to the last-two-layers MCMC approximation. This may arise from the fact that MCMC finds the singular setting in the last-two-layers more challenging. In Figure \ref{fig:avg_gen_err_fullbatch_realisable}, we clarify the effect of the network architecture by varying the following factors:  1) either 1 or 5 layers in $g$, and 2) ReLU or identity activation in $h$. Table \ref{table:avg_gen_err_fullbatch_realisable} is a companion to Figure \ref{fig:avg_gen_err_fullbatch_realisable} and tabulates for each approximation scheme the slope of $1/n$ versus ${\E}_n G(n)$, also known as the learning coefficient. The $R^2$ corresponding to the linear fit is also provided. 
In Appendix \ref{appendix:generalizaton}, we also show the corresponding results when 1) the data-generating mechanism and the assumed model do not satisfy the condition of realisability and/or 2) the MAP estimate is obtained via minibatch stochastic gradient descent instead of batch gradient descent.

\section{Simple functions and complex singularities}\label{section:simple_func}

In singular models the RLCT may vary with the true distribution (in contrast to regular models) and in this section we examine this phenomenon in a simple example. As the true distribution becomes more complicated relative to the supposed model, the singularities of the analytic variety of true parameters should become simpler and hence the RLCT should increase \citep[\S 7.6]{watanabe_algebraic_2009}. Our experiments are inspired by \citep[\S 7.2]{watanabe_algebraic_2009} where $\operatorname{tanh}(x)$ networks are considered and the true distribution (associated to the zero network) is held fixed while the number of hidden nodes is increased.

Consider the model $p(y|x,w)$ in (\ref{eq:gaussian_model_in_w}) where
$
f(x,w) = c + \sum_{i=1}^H q_i \operatorname{ReLU}( \langle w_i, x \rangle + b_i )
$
is a two-layer ReLU network with weight vector $w = (\{w_i\}_{i=1}^H, \{b_i\}_{i=1}^H, \{q_i\}_{i=1}^H, c) \in \mathbb{R}^{4H+1}$ and $w_i \in \mathbb{R}^2, b_i \in \mathbb{R}, q_i \in \mathbb{R}$ for $1 \le i \le H$. We let $W$ be some compact neighborhood of the origin.

\begin{figure}[h]
	\begin{center}
		\includegraphics[scale=0.5]{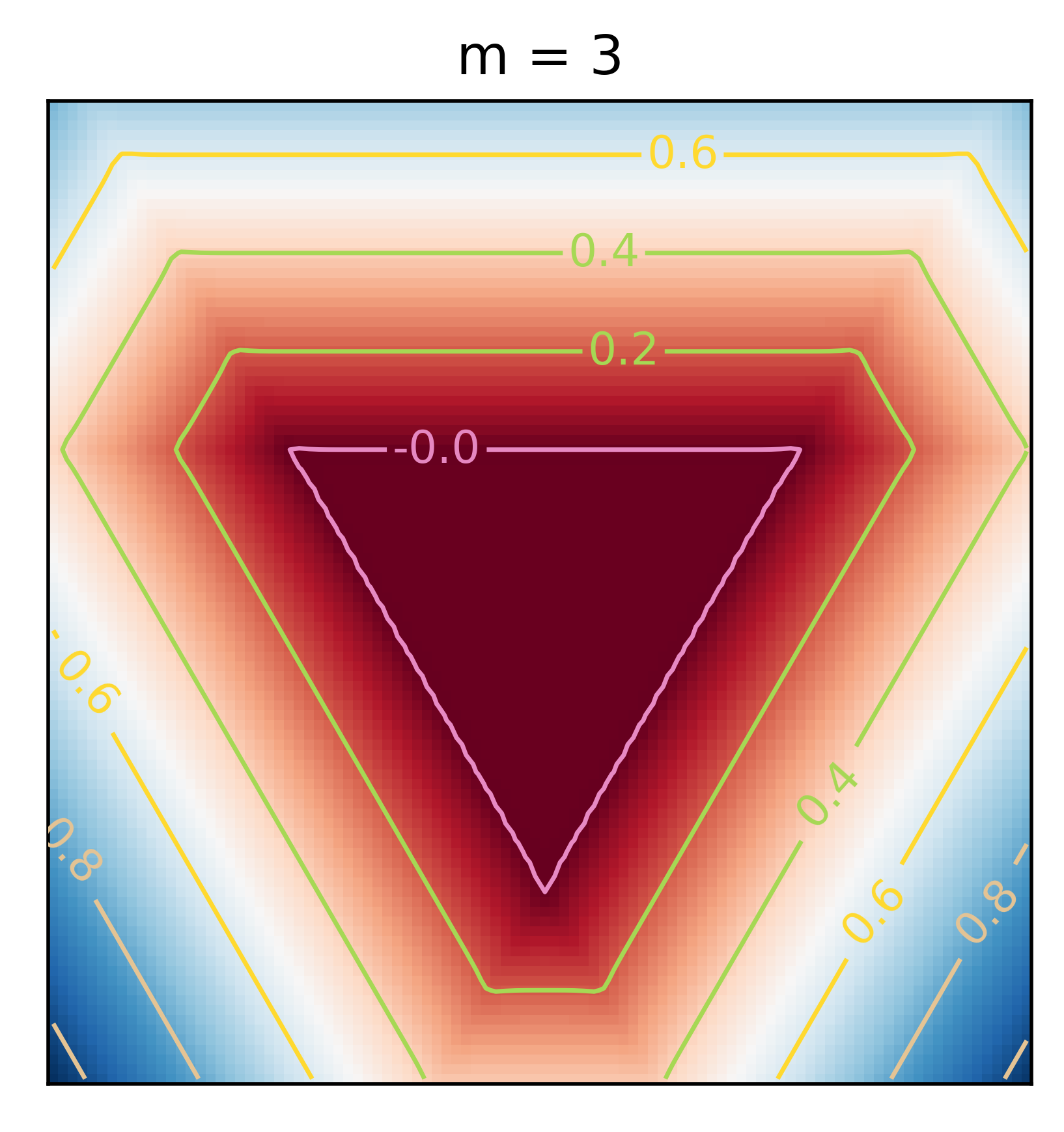}
		\includegraphics[scale=0.5]{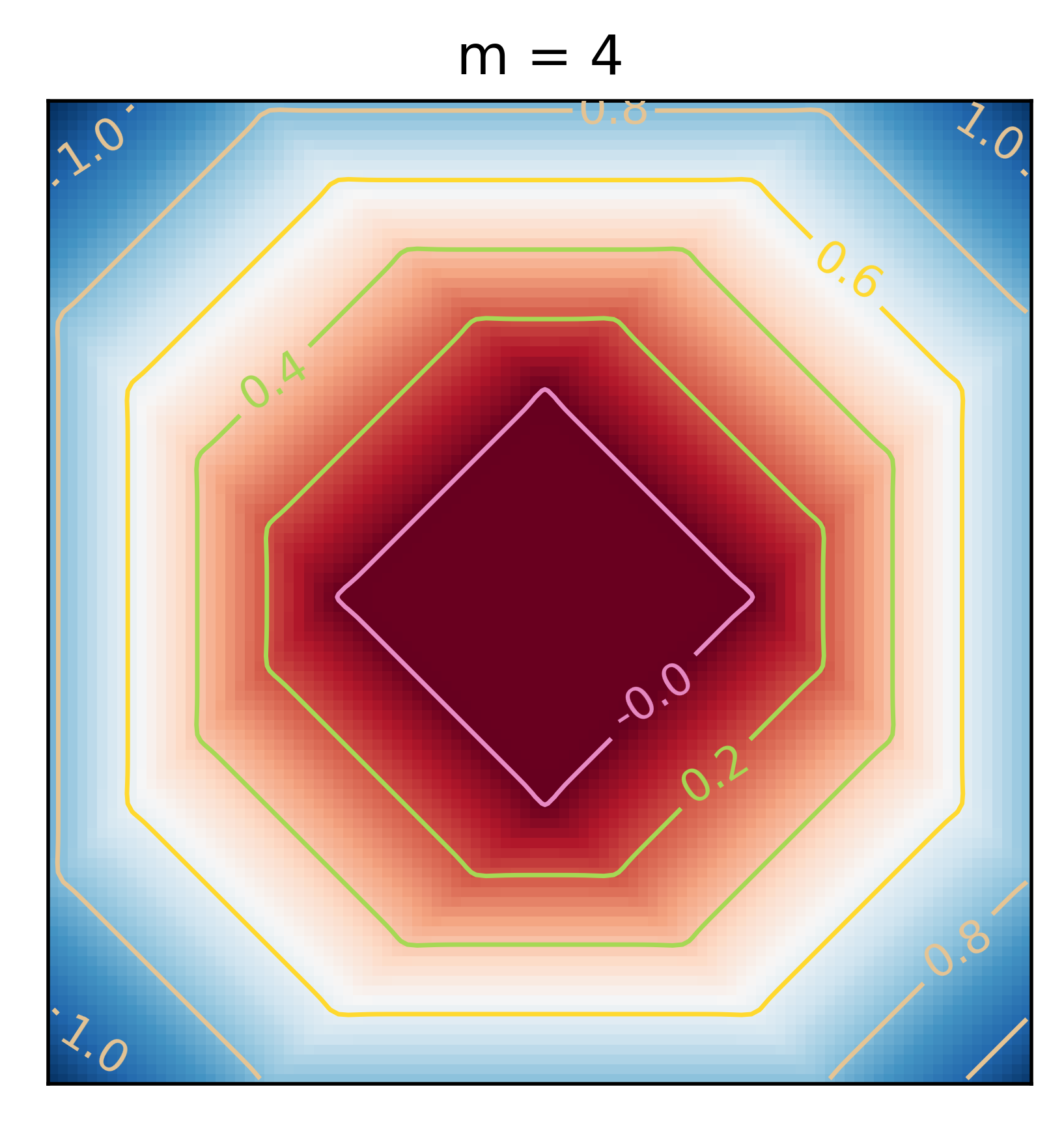}
		\includegraphics[scale=0.5]{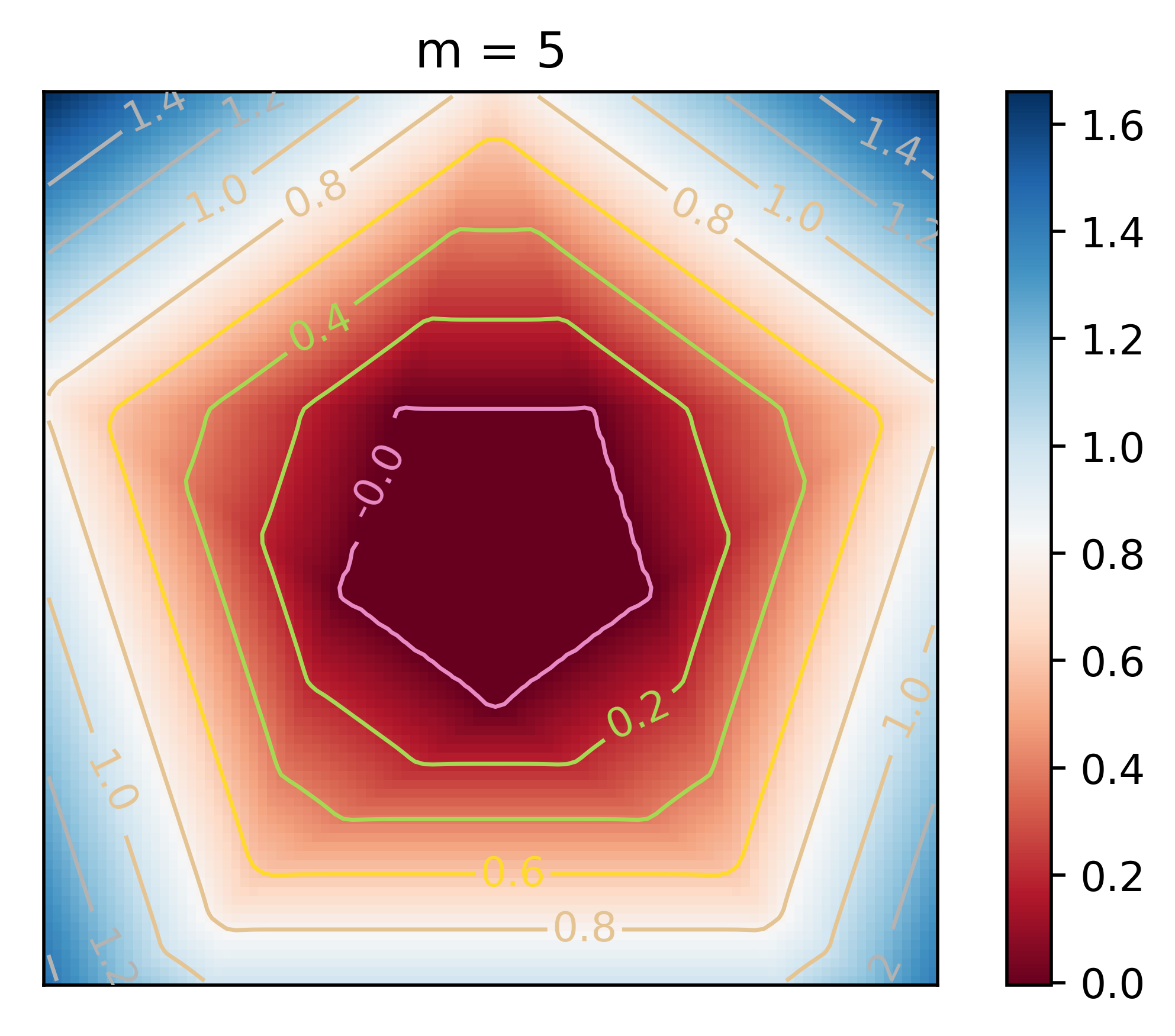}
	\end{center}
	\caption{Increasingly complicated true distributions $q_m(x,y)$ on $[-1,1]^2 \times \mathbb{R}$.}
	\label{fig:simp_func_complex}
\end{figure}

\begin{table}[h]
	\centering
	\caption{\footnotesize RLCT estimates for ReLU and SiLU networks. We observe the RLCT increasing as $m$ increases, i.e., the true distribution becomes more ``complicated'' relative to the supposed model.}
	\label{table:hyper}
		\begin{tabular}
			{r l l l l}
			\toprule
			\textbf{m}  & \textbf{Nonlinearity}  & \textbf{RLCT} & \textbf{Std} & \textbf{R squared}\\ 
			\midrule
			3 & ReLU & 0.526301 & 0.027181 & 0.983850\\
			3 & SiLU & 0.522393 & 0.026342 & 0.978770\\
			\hline
			4 & ReLU & 0.539590 & 0.024774 & 0.991241\\
			4 & SiLU & 0.539387 & 0.020769 & 0.988495\\
			\hline
			5 & ReLU & 0.555303 & 0.002344 & 0.993092\\
			5 & SiLU & 0.555630 & 0.021184 & 0.990971\\
			\bottomrule
		\end{tabular}
\end{table}

%


Given an integer $3 \le m \le H$ we define a network $s_m \in W$ and $q_m(y|x) := p(y|x, s_m)$ as follows. Let $g \in SO(2)$ stand for rotation by $2\pi/m$, set $w_1 = \sqrt{g} \, (1,0)^T$. The components of $s_m$ are the vectors $w_i = g^{i-1} w_1$ for $1 \le i \le m$ and $w_i = 0$ for $i > m$, $b_i = - \tfrac{1}{3}$ and $q_i = 1$ for $1 \le i \le m$ and $b_i = q_i = 0$ for $i > m$, and finally $c = 0$. The factor of $\tfrac{1}{3}$ ensures the relevant parts of the decision boundaries lie within $X = [-1,1]^2$. We let $q(x)$ be the uniform distribution on $X$ and define $q_m(x,y) = q_m(y|x) q(x)$. The functions $f(x,s_m)$ are graphed in Figure \ref{fig:simp_func_complex}. It is intuitively clear that the complexity of these true distributions increases with $m$.

We let $\varphi$ be a normal distribution $N(0,50^2)$ and estimate the RLCTs of the triples $(p, q_m, \varphi)$. We conducted the experiments with $H = 5$, $n = 1000$. For each $m \in \{3,4,5\}$, Table \ref{table:hyper} shows the estimated RLCT. Algorithm \ref{alg:thm4} in Appendix \ref{appendix:RLCT_estimation} details the estimation procedure which we base on \citep[Theorem 4]{watanabe_widely_2013}. As predicted the RLCT increases with $m$ verifying that in this case, the simpler true distributions give rise to more complex singularities.

Note that the dimension of $W$ is $d = 21$ and so if the model were regular the RLCT would be $10.5$. It can be shown that when $m = H$ the set of true parameters $W_0 \subseteq W$ is a regular submanifold of dimension $m$. If such a model were minimally singular its RLCT would be $\tfrac{1}{2}( (4m + 1) - m ) = \tfrac{1}{2}( 3m + 1 )$. In the case $m = 5$ we observe an RLCT more than an order of magnitude less than the value $8$ predicted by this formula. So the function $K$ does not behave like a quadratic form near $W_0$.

Strictly speaking it is incorrect to speak of the RLCT of a ReLU network because the function $K(w)$ is not necessarily analytic (Example \ref{example:not_analytic}). However we observe empirically that the predicted linear relationship between $E^\beta_w[nL_n(w)]$ and $1/\beta$ holds in our small ReLU networks (see the $R^2$ values in Table \ref{table:hyper}) and that the RLCT estimates are close to those for the two-layer SiLU network \citep{hendrycks2016gaussian} which is analytic (the SiLU or sigmoid weighted linear unit is $\sigma(x) = x (1 + e^{-\tau x})^{-1}$ which approaches the ReLU as $\tau \to \infty$. We use $\tau = 100.0$ in our experiments). The competitive performance of SiLU on standard benchmarks \citep{ramachandran2017swish} shows that the non-analyticity of ReLU is probably not fundamental.


\section{Future directions}

Deep neural networks are singular models, and that's good: the presence of singularities is \emph{necessary} for neural networks with large numbers of parameters to have low generalisation error. Singular learning theory clarifies how classical tools such as the Laplace approximation are not just inappropriate in deep learning on narrow technical grounds: the failure of this approximation and the existence of interesting phenomena like the generalisation puzzle have a common cause, namely the existence of degenerate critical points of the KL function $K(w)$. 
Singular learning theory is a promising foundation for a mathematical theory of deep learning. However, much remains to be done. The important open problems include:

\paragraph{SGD vs the posterior.} A number of works \citep{Simsekli17,mandt_stochastic_2018,smith_stochastic_2018} suggest that mini-batch SGD may be governed by SDEs that have the posterior distribution as its stationary distribution and this may go towards understanding why SGD works so well for DNNs. 

\paragraph{RLCT estimation for large networks.} 
Theoretical RLCTs have been cataloged for small neural networks, albeit at significant effort\footnote{Hironaka's resolution of singularities guarantees existence. However it is difficult to do the required blowup transformations in high dimensions to obtain the standard form.} \citep{aoyagi_stochastic_2005, aoyagi_resolution_2006}. We believe RLCT estimation in these small networks should be standard benchmarks for any method that purports to approximate the Bayesian posterior of a neural network.
No theoretical RLCTs or estimation procedure are known for modern DNNs. Although MCMC provides the gold standard it does not scale to large networks.
The intractability of RLCT estimation for DNNs is not necessarily an obstacle to reaping the insights offered by singular learning theory. For instance, used in the context of model selection, the exact value of the RLCT is not as important as model selection consistency. We also demonstrated the utility of singular learning results such as (\ref{eq:Bayes_generalisation}) and (\ref{eq:map_generalisation}) which can be exploited even without knowledge of the exact value of the RLCT.

\paragraph{Real-world distributions are unrealisable.}
The existence of power laws in neural language model training \citep{hestness_2017,kaplan2020scaling} is one of the most remarkable experimental results in deep learning. These power laws may be a sign of interesting new phenomena in singular learning theory when the true distribution is unrealisable.



%

\bibliography{main}
\bibliographystyle{unsrtnat}

\appendix
\section{Appendix}

\subsection{Neural networks are strictly singular}
\label{appendix:nn_singular}
Many-layered neural networks are strictly singular \citep[\S 7.2]{watanabe_algebraic_2009}. The degeneracy of the Hessian in deep learning has certainly been acknowledged in e.g., \citet{DBLP:journals/corr/SagunBL16} which recognises the eigenspectrum is concentrated around zero and in \citet{pennington_spectrum_2018} which deliberately studies the Fisher information matrix of a \textit{single}-hidden-layer, rather than multilayer, neural network. 

We first explain how to think about a neural network in the context of singular learning theory. A feedforward network of depth $c$ parametrises a function $f: \mathbb{R}^N \lto \mathbb{R}^M$ of the form
\[
f = A_c \circ \sigma_{c-1} \circ A_{c-1} \cdots \sigma_1 \circ A_1
\]
where the $A_l: \mathbb{R}^{d_{l-1}} \lto \mathbb{R}^{d_{l}}$ are affine functions and $\sigma_l: \mathbb{R}^{d_{l}} \lto \mathbb{R}^{d_{l}}$ is coordinate-wise some fixed nonlinearity $\sigma: \mathbb{R} \lto \mathbb{R}$. Let $W$ be a compact subspace of $\mathbb{R}^d$ containing the origin, where $\mathbb{R}^d$ is the space of sequences of affine functions $(A_l)_{l=1}^c$ with coordinates denoted $w_1,\ldots,w_d$ so that $f$ may be viewed as a function $f: \mathbb{R}^N \times W \lto \mathbb{R}^M$. We define $p(y|x,w)$ as in (\ref{eq:gaussian_model_in_w}). We assume the true distribution is realisable, $q(y|x) = p(y|x,w_0)$ and that a distribution $q(x)$ on $\mathbb{R}^N$ is fixed with respect to which $p(x,y) = p(y|x)q(x)$ and $q(x,y) = q(y|x)q(x)$. Given some prior $\varphi(w)$ on $W$ we may apply singular learning theory to the triplet $(p,q,\varphi)$.

By straightforward calculations we obtain
\begin{align}
	K(w) &= \tfrac{1}{2} \int \| f(x,w) - f(x,w_0) \|^2 q(x) dx \label{eq:K_nn}\\
	\tfrac{\partial^2}{\partial w_i \partial w_j} K(w) &= \int \Big\langle \tfrac{\partial}{\partial w_i} f(x,w), \tfrac{\partial}{\partial w_j} f(x,w) \Big\rangle q(x) dx \nonumber \\
	&\quad + \int \Big\langle f(x,w) - f(x,w_0), \tfrac{\partial^2}{\partial w_i \partial w_j} f(x,w) \Big\rangle q(x) dx \label{eq:Hessian}\\
	I(w)_{ij} &= \frac{1}{2^{(M-3)/2} \pi^{(M-2)/2}} \int \Big\langle \tfrac{\partial}{\partial w_i} f(x,w), \tfrac{\partial}{\partial w_j} f(x,w) \Big\rangle q(x) dx\label{eq:fisher_relu}
\end{align}
where $\langle -, - \rangle$ is the dot product. We assume $q(x)$ is such that these integrals exist.

It will be convenient below to introduce another set of coordinates for $W$. Let $w_{jk}^{l}$ denote the weight from the $k$th neuron in the $(l-1)$th layer to the $j$th neuron in the $l$th layer and let $b_j^{l}$ denote the bias of the $j$th neuron in the $l$th layer. Here $1 \le l \le c$ and the input is layer zero. Let $u_{j}^{l}$ and $a_{j}^{l}$ denote the value of the $j$th neuron in the $l$th layer before and after activation, respectively. Let $u^{l}$ and $a^{l}$ denote the vectors with values $u_{j}^{l}$ and $a_{j}^{l}$, respectively. Let $d_{l}$ denote the number of neurons in the $l$th layer. Then
\begin{align*}
	u_{j}^{l} &=\sum_{k=1}^{d_{l-1}}w_{jk}^{l}a_{k}^{l-1}+b_{j}^{l}, &1 \le l \le c, 1 \le j \le d_l\\
	a_{j}^{l} &=\sigma(u_{j}^{l}) & 1 \le l < c, 1 \le j \le d_l
\end{align*}
with the convention that $a^0 = x$ is the input and $u^c = y$ is the output.

In the case where $\sigma = \operatorname{ReLU}$ the partial derivatives $\frac{\partial}{\partial w_j} f$ do not exist on all of $\mathbb{R}^N$. However given $w \in W$ we let $\mathcal{D}(w)$ denote the complement in $\mathbb{R}^N$ of the union over all hidden nodes of the associated decision boundary, that is
\[
\mathbb{R}^N \setminus \mathcal{D}(w) = \bigcup_{1 \le l < c} \bigcup_{1 \le j \le d_l} \{ x \in \mathbb{R}^N : u^l_j(x) = 0 \}\,.
\]
The partial derivative $\frac{\partial}{\partial w_j} f$ exists on the open subset $\{ (x,w) : x \in \mathcal{D}(w) \}$ of $\mathbb{R}^N \times W$. 

\begin{lemma}\label{lemma:reln}
	Suppose $\sigma = \operatorname{ReLU}$ and there are $c > 1$ layers. For any hidden neuron $1 \le j \le d_l$ in layer $l$ with $1 \le l < c$ there is a differential equation
	\begin{align*}
		\Big\{ \sum_{k=1}^{d_{l-1}}w_{jk}^{l}\frac{\partial}{\partial w_{jk}^{l}} + b_{j}^{l}\frac{\partial}{\partial b_{j}^{l}}-\sum_{i=1}^{d_{l+1}}w_{ij}^{l+1}\frac{\partial}{\partial w_{ij}^{l+1}} \Big\} f =0
	\end{align*}
\end{lemma}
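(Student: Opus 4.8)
The plan is to exploit the positive homogeneity of the ReLU nonlinearity, $\operatorname{ReLU}(\lambda t) = \lambda \operatorname{ReLU}(t)$ for $\lambda > 0$, which induces a one-parameter family of reparametrizations that leave the network function $f$ pointwise unchanged. The asserted differential equation will then appear as the infinitesimal generator of this symmetry, obtained by differentiating the invariance at the identity $\lambda = 1$.

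Concretely, I would fix the hidden neuron $(l,j)$ with $1 \le l < c$ and, for $\lambda > 0$, define a reparametrized weight vector $w_\lambda$ that agrees with $w$ in every coordinate except that the incoming weights and bias of this neuron are scaled by $\lambda$, i.e.\ $w^l_{jk} \mapsto \lambda w^l_{jk}$ and $b^l_j \mapsto \lambda b^l_j$, while its outgoing weights are scaled by $\lambda^{-1}$, i.e.\ $w^{l+1}_{ij} \mapsto \lambda^{-1} w^{l+1}_{ij}$. Tracking the forward pass, the pre-activation of neuron $(l,j)$ becomes $u^l_j \mapsto \lambda u^l_j$, so by homogeneity its post-activation becomes $a^l_j \mapsto \lambda a^l_j$; the contribution $w^{l+1}_{ij} a^l_j$ to each pre-activation in layer $l+1$ is therefore unchanged, every other input to layer $l+1$ is untouched, and all layers $\le l-1$ are unaffected. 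Hence $u^{l+1}$, and consequently the entire output, satisfies $f(x, w_\lambda) = f(x, w)$ for all $\lambda > 0$. The argument is uniform in the layer index: the only requirement is that neuron $(l,j)$ carry a ReLU and sit strictly before the output, which is exactly the hypothesis $1 \le l < c$ (when $l+1 = c$ the output layer has no nonlinearity, but this is irrelevant since we only need $u^{l+1}$ preserved).

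The key analytic point is that this identity may be differentiated in $\lambda$. Because scaling by $\lambda > 0$ preserves the sign of $u^l_j$ and leaves every other pre-activation unchanged, the decision-boundary set is invariant along the path, $\mathcal{D}(w_\lambda) = \mathcal{D}(w)$; consequently, for $x \in \mathcal{D}(w)$ the partial derivatives $\partial f/\partial w^l_{jk}$, $\partial f/\partial b^l_j$, $\partial f/\partial w^{l+1}_{ij}$ exist on the open set $\{(x,w) : x \in \mathcal{D}(w)\}$ identified in the excerpt, and $\lambda \mapsto f(x, w_\lambda)$ is differentiable near $\lambda = 1$. Differentiating $f(x, w_\lambda) = f(x, w)$ with respect to $\lambda$ and evaluating at $\lambda = 1$, the chain rule together with $\tfrac{d}{d\lambda}(\lambda^{-1})\big|_{\lambda=1} = -1$ yields
\[
\sum_{k=1}^{d_{l-1}} w^l_{jk}\,\frac{\partial f}{\partial w^l_{jk}} + b^l_j\,\frac{\partial f}{\partial b^l_j} - \sum_{i=1}^{d_{l+1}} w^{l+1}_{ij}\,\frac{\partial f}{\partial w^{l+1}_{ij}} = 0,
\]
which is exactly the claimed relation.

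I expect the only genuine obstacle to be the bookkeeping around the non-differentiability of ReLU: one must confine attention to $\mathcal{D}(w)$ and check that the reparametrization never pushes $x$ onto a decision boundary, so that the chain rule is legitimate, and must verify that differentiating the composite forward pass through the scaled coordinates indeed produces the three families of partial-derivative terms with the stated signs. The homogeneity itself is elementary, and since only $\lambda$ in a neighbourhood of $1$ (hence $\lambda > 0$) enters the differentiation, the restriction of ReLU-homogeneity to positive scalings causes no difficulty.
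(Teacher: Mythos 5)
Your proof is correct, but it takes a genuinely different route from the paper. The paper proves the lemma by direct computation: it writes out the three families of partial derivatives $\partial f/\partial w^{l+1}_{ij}$, $\partial f/\partial w^{l}_{jk}$, $\partial f/\partial b^{l}_{j}$ via the chain rule through the pre-activations $u^{l+1}$, using $\operatorname{ReLU}(t) = t\,H(t)$ with $H = \tfrac{d}{dt}\operatorname{ReLU}$, and then observes that $\sum_k w^l_{jk}\,\partial f/\partial w^l_{jk} + b^l_j\,\partial f/\partial b^l_j$ and $\sum_i w^{l+1}_{ij}\,\partial f/\partial w^{l+1}_{ij}$ both collapse to $\sum_i (\partial f/\partial u^{l+1}_i)\, w^{l+1}_{ij}\, u^l_j H(u^l_j)$. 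You instead exhibit the one-parameter rescaling group $w^l_{j\cdot} \mapsto \lambda w^l_{j\cdot}$, $b^l_j \mapsto \lambda b^l_j$, $w^{l+1}_{\cdot j} \mapsto \lambda^{-1} w^{l+1}_{\cdot j}$, show $f(x,w_\lambda) = f(x,w)$ and $\mathcal{D}(w_\lambda) = \mathcal{D}(w)$, and differentiate at $\lambda = 1$. This is precisely the content of the remark the authors make \emph{after} Lemma \ref{lemma:all_degen} --- that the differential operator is the infinitesimal generator of the positive scale invariance --- so you have in effect turned their interpretive remark into the proof itself. Your version is conceptually cleaner, makes the geometric origin of the relation transparent, and extends verbatim to any positively homogeneous degree-one activation; the paper's computation is more self-contained, produces the explicit formulae for the individual partials (which clarify exactly where $H(u^l_j)$ enters), and sidesteps the need to justify differentiating along the path $\lambda \mapsto w_\lambda$ (a point you do handle correctly, since for fixed $x \in \mathcal{D}(w)$ the activation pattern is locally constant in $w$, so $f$ is locally polynomial in the weights and the chain rule applies).
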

which holds on $\mathcal{D}(w)$ for any fixed $w \in W$.
\begin{proof}
	Without loss of generality assume $M = 1$, to simplify the notation. Let $e_{i}\in \mathbb{R}^{d_{l+1}}$ denote a unit vector and let $H(x)=\frac{d}{dx}\operatorname{ReLU}(x)$. Writing $\frac{\partial f}{\partial u^{l+1}}$ for a gradient vector
	\begin{gather*}
		\frac{\partial f}{\partial w_{ij}^{l+1}} = \Big\langle \frac{\partial f}{\partial u^{l+1}}, \frac{\partial u^{l+1}}{\partial w_{ij}^{l+1}} \Big\rangle = \Big\langle \frac{\partial f}{\partial u^{l+1}}, a_{j}^{l}e_{i} \Big\rangle =\frac{\partial f}{\partial u^{l+1}_i}u_{j}^{l}H(u_{j}^{l})\\
		\frac{\partial f}{\partial w_{jk}^{l}} =\Big\langle \frac{\partial f}{\partial u^{l+1}}, \frac{\partial u^{l+1}}{\partial w_{jk}^{l}} \Big\rangle = \Big\langle \frac{\partial f}{\partial u^{l+1}}, \sum_{i=1}^{d_{l+1}} w_{ij}^{l+1}a_{k}^{l-1}H(u_{j}^{l})e_{i} \Big\rangle = \sum_{i=1}^{d_{l+1}} \frac{\partial f}{\partial u^{l+1}_i}w_{ij}^{l+1}a_{k}^{l-1}H(u_{j}^{l}) \\
		\frac{\partial f}{\partial b_{j}^{l}} = \Big\langle \frac{\partial f}{\partial u^{l+1}}, \frac{\partial u^{l+1}}{\partial b_{j}^{l}} \Big\rangle =\Big\langle \frac{\partial f}{\partial u^{l+1}}, \sum_{i=1}^{d_{l+1}}w_{ij}^{l+1}H(u_{j}^{l})e_{i} \Big\rangle = \sum_{i=1}^{d_{l+1}} \frac{\partial f}{\partial u^{l+1}_i}w_{ij}^{l+1}H(u_{j}^{l}).
	\end{gather*}
	The claim immediately follows.
\end{proof}

\begin{lemma}\label{lemma:all_degen} Suppose $\sigma = \operatorname{ReLU}, c > 1$ and that $w \in W$ has at least one weight or bias at a hidden node nonzero. Then the matrix $I(w)$ is degenerate and if $w \in W_0$ then the Hessian of $K$ at $w$ is also degenerate.
\end{lemma}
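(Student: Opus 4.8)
The plan is to use Lemma \ref{lemma:reln} to exhibit an explicit nonzero vector in the kernel of the relevant symmetric matrices. Observe first that by \ref{eq:fisher_relu} the Fisher information $I(w)$ is a positive scalar multiple of the Gram matrix
\[
G_{ij} = \int \Big\langle \tfrac{\partial}{\partial w_i} f(x,w), \tfrac{\partial}{\partial w_j} f(x,w) \Big\rangle q(x)\, dx,
\]
so degeneracy of $I(w)$ reduces to degeneracy of $G$, and this same $G$ will reappear when we treat the Hessian. To produce the kernel vector, I would select a hidden node: by hypothesis some neuron $(l,j)$ with $1 \le l < c$ carries a nonzero weight or bias, and Lemma \ref{lemma:reln} supplies a first-order operator $D = \sum_k w_{jk}^l \partial_{w_{jk}^l} + b_j^l \partial_{b_j^l} - \sum_i w_{ij}^{l+1} \partial_{w_{ij}^{l+1}}$ annihilating $f$ on $\mathcal{D}(w)$. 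Writing $D = \sum_m \xi_m \partial_{w_m}$ identifies a coefficient vector $\xi \in \mathbb{R}^d$, and the whole point of the hypothesis is that $\xi \neq 0$, since the guaranteed nonzero parameter appears as one of its entries.

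The core step is then a one-line quadratic-form computation. Since $\tfrac{\partial}{\partial w_i} f$ exists on $\{(x,w) : x \in \mathcal{D}(w)\}$ and the complement $\mathbb{R}^N \setminus \mathcal{D}(w)$ is $q$-null, I would compute
\[
\xi^\top G\, \xi = \int \Big\| \sum_m \xi_m \tfrac{\partial}{\partial w_m} f(x,w) \Big\|^2 q(x)\, dx = \int_{\mathcal{D}(w)} \| (Df)(x,w) \|^2 q(x)\, dx = 0,
\]
using $Df = 0$ on $\mathcal{D}(w)$. Because $G$ is positive semidefinite, $\xi^\top G \xi = 0$ forces $G\xi = 0$; hence $\xi$ is a nonzero element of $\ker G$ and $I(w)$ is degenerate.

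For the second claim, at a point $w \in W_0$ I would return to the Hessian formula \ref{eq:Hessian}. Realisability together with the Gaussian form \ref{eq:gaussian_model_in_w} forces $f(x,w) = f(x,w_0)$ for $q$-almost every $x$, so the second integral in \ref{eq:Hessian} (the one weighted by $f(x,w) - f(x,w_0)$) vanishes identically and $\tfrac{\partial^2}{\partial w_i \partial w_j} K(w) = G_{ij}$. The very same $\xi$ therefore lies in the kernel of the Hessian of $K$, which is consequently degenerate.

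I expect the only delicate point to be the measure-theoretic bookkeeping: justifying that the ReLU decision boundaries are $q$-null (they are contained in a finite union of hyperplane pieces, which are Lebesgue-null and hence null for any absolutely continuous $q(x)$) so that the pointwise identity $Df = 0$ on $\mathcal{D}(w)$ upgrades to the vanishing integral $\xi^\top G \xi = 0$, and checking that the Gram and Hessian integrals in \ref{eq:fisher_relu}--\ref{eq:Hessian} are well defined despite the non-differentiability of $f$ across those boundaries. Everything else is elementary: a vanishing quadratic form on a positive semidefinite matrix yields a kernel vector, and the second-order term of the Hessian simply drops out on $W_0$.
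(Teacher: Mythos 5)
Your proposal is correct and follows essentially the same route as the paper: both use the differential relation of Lemma \ref{lemma:reln} to extract a nonzero coefficient vector annihilating the Gram-type matrix in (\ref{eq:fisher_relu}), and both note that the second summand of (\ref{eq:Hessian}) vanishes on $W_0$ so the same vector kills the Hessian. The only (immaterial) difference is that you pass through the quadratic form and positive semidefiniteness to get $G\xi = 0$, where the paper observes directly that the rows of $I(w)$ inherit the linear dependence relation.
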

\begin{proof}
	Let $w \in W$ be given, and choose a hidden node where at least one of the incident weights (or bias) is nonzero. Then Lemma \ref{lemma:reln} gives a nontrivial linear dependence relation $\sum_i \lambda_i \frac{\partial}{\partial w_i} f = 0$ as functions on $\mathcal{D}(w)$. The rows of $I(w)$ satisfy the same linear dependence relation. At a true parameter the second summand in (\ref{eq:Hessian}) vanishes so by the same argument the Hessian is degenerate.
\end{proof}

\begin{remark}\label{remark:byebye_laplace}
	Lemma \ref{lemma:all_degen} implies that every true parameter for a nontrivial ReLU network is a degenerate critical point of $K$. Hence in the study of nontrivial ReLU networks it is never appropriate to divide by the determinant of the Hessian of $K$ at a true parameter, and in particular Laplace or saddle-point approximations at a true parameter are invalid.
\end{remark}

The well-known positive scale invariance of ReLU networks \citep{phuong2020functional} is responsible for the linear dependence of Lemma \ref{lemma:reln}, in the precise sense that the given differential operator is the infinitesimal generator \citep[\S IV.3]{boothby1986introduction} of the scaling symmetry. However, this is only one source of degeneracy or singularity in ReLU networks. The degeneracy, as measured by the RLCT, is much lower than one would expect on the basis of this symmetry alone (see Section \ref{section:simple_func}).

\begin{example}\label{example:not_analytic} In general the KL function $K(w)$ for ReLU networks is not analytic. For the minimal counterexample, let $q(x)$ be uniform on $[-N, N]$ and zero outside and consider
	\[
	K(b) = \int q(x) ( \operatorname{ReLU}(x - b) - \operatorname{ReLU}(x) )^2 dx\,.
	\]
	It is easy to check that up to a scalar factor
	\[
	K(b) = \begin{cases} -\tfrac{2}{3} b^3 + b^2 N & 0 \le b \le N \\
	-\tfrac{1}{3} b^3 + b^2 N & -N \le b \le 0
	\end{cases}
	\]
	so that $K$ is $C^2$ but not $C^3$ let alone analytic.
\end{example}

\subsection{Reduced rank regression} \label{appendix:reducedrank}
For reduced rank regression, the model is
$$
p( y \rvert x, w) = \frac{1}{(2\pi \sigma^2)^{N/2}} \exp\left( -
\frac{1}{2 \sigma^2} | y - BA x|^2\right),
$$
where $x \in \mathbb{R}^M, y \in \mathbb{R}^N$, $A$ an $M \times H$
matrix and $B$ an $H \times N$ matrix; the parameter $w$ denotes the
entries of $A$ and $B$, i.e. $w = (A, B)$, and $\sigma > 0$ is a
parameter which for the moment is irrelevant.

If the true distribution is realisable then there is $w_0 = (A_0,
B_0)$ such that $q(y|x) = p(y \rvert x, w_0)$.  Without loss of generality assume $q(x)$ is the uniform density. In this case the KL
divergence from $p(y \rvert x, w)$ to $q(y|x)$ is
$$
K(w) = \int q(y|x) \log \frac{q(y|x)}{p(y|x, w)} dxdy = \| BA -
B_0A_0 \|^2 \left( 1 + E(w) \right)
$$
where the error $E$ is smooth and $E(w) = O(\| BA -
B_0A_0 \|^2)$ in any region where $\| BA -
B_0A_0 \| < C$, so $K(w)$ is equivalent to $\|BA - B_0 A_0\|^2$.  We
write $K(w) = \|BA - B_0 A_0\|^2$ for simplicity below.

Now assume that $B_0A_0$ is symmetric and that $B_0$ is square,
i.e.\ $N = H$.  Then the zero locus of $K(w)$ is explicitly given as
follows
$$
W_0 = \{ (A, B) : \det B \neq 0 \mbox{ and } A = B^{-1}B_0A_0 \}.
$$
It follows that $W_0$ is globally a graph over $GL(H;
\mathbb{R})$.  Indeed, the set $(B^{-1}B_0 A_0, B)$ with $B \in GL(H;
\mathbb{R})$ is exactly $W_0$.  Thus $W_0$ is a smooth
$H^2$-dimensional submanifold of $\mathbb{R}^{H^2} \times
\mathbb{R}^{H \times M}$. To prove that $W_0$ is minimally singular in the sense of Section \ref{section:no_flat_minima} it suffices to show that $\mathrm{rank} ( D^2_{A,B}K) \ge HM$ where $D^2_{A,B} K$ denotes the Hessian, but as it is no more difficult to do so, we find explicit local
coordinates $(u, v)$ near an arbitrary point $(\overline{A},
\overline{B}) \in W_0$ for which $\{ v = 0 \} = W_0$
and $K(u, v) = a(u,v)|u|^2$ in this neighborhood, where $a$ is a
$C^\infty$ function with $a \ge c > 0$ for some $c$.  Write
$$
A(v) = (\overline{B} + v)^{-1}B_0 A_0.
$$
Then $u, v \mapsto (A(v) + u, \overline{B} + v)$ gives local
coordinates on $\mathbb{R}^{H^2} \times
\mathbb{R}^{H \times M}$ near $(\overline{A}, \overline{B})$, and
\begin{equation*}
	\begin{split}
		K(u, v) &= | (\overline{B} + v)( (\overline{B} + v)^{-1}B_0 A_0 + u) -
		B_0 A_0 | \\
		&= | B_0 A_0 +  (\overline{B} + v) u -
		B_0 A_0 |^2 \\
		&= | (\overline{B} + v) u |^2,
	\end{split}
\end{equation*}
so for $v$ sufficiently small (and hence $\overline{B} + v$
invertible) we can take $a(u,v) = | (\overline{B} + v) u |^2 /
|u|^2$.

\subsection{RLCT estimation} \label{appendix:RLCT_estimation}


In this section we detail the estimation procedure for the RLCT used in Section \ref{section:simple_func}. Let $L_n(w)$ be the negative log likelihood as in (\ref{eq:nll}). Define the data likelihood at inverse temperature $\beta >0$ to be $p^\beta(\mathcal D_n | w) = \Pi_{i=1}^n p(y_i |x_i, w)^\beta$ which can also be written 
\begin{equation}
	p^\beta(\mathcal D_n | w) = \exp(-\beta n L_n(w)).
	\label{general_likelihood}
\end{equation}
The posterior distribution, at inverse temperature $\beta$, is defined as 
\begin{equation}
	p^\beta(w|\mathcal D_n) = \frac{\Pi_{i=1}^n p(y_i|x_i,w)^\beta \varphi(w)}{\int_W \Pi_{i=1}^n p(y_i|x_i,w)^\beta \varphi(w)} = \frac{p^\beta(\mathcal D_n|w) \varphi(w)}{p^\beta(\mathcal D_n)}
	\label{general_posterior}
\end{equation}
where $\varphi$ is the prior distribution on the network weights $w$ and
\begin{equation}
	p^\beta(\mathcal D_n) = \int_W p^\beta(\mathcal D_n|w) \varphi(w) \,dw
	\label{general_marginal_likelihood}
\end{equation}
is the marginal likelihood of the data at inverse temperature $\beta$. 
Finally, denote the expectation of a random variable $R(w)$ with respect to the tempered posterior $p^\beta(w|\mathcal D_n)$ as
\begin{equation}
	{\E}_w^\beta [R(w)] = \int_W R(w) p^\beta(w|\mathcal D_n) \,dw
	\label{general_expectation_posterior}
\end{equation}
In the main text, we drop the superscript in the quantities (\ref{general_likelihood}), (\ref{general_posterior}), (\ref{general_marginal_likelihood}), (\ref{general_expectation_posterior}) when $\beta = 1$, e.g., $p(\mathcal D_n)$ rather than $p^1(\mathcal D_n)$.

Assuming the conditions of Theorem 4 in \citet{watanabe_widely_2013} hold, we have
\begin{equation}
	{\E}_w^\beta [nL_n(w)] = nL_n(w_0) + \frac{\lambda }{\beta} + U_n \sqrt{\frac{\lambda}{2 \beta}} + O_p(1)
	\label{eq:Theorem4_WBIC}
\end{equation}
where $\beta_0$ is a positive constant and $U_n$ is a sequence of random variables satisfying ${\E}_n U_n = 0$. 
In Algorithm \ref{alg:thm4}, we describe an estimation procedure for the RLCT based on the asymptotic result in (\ref{eq:Theorem4_WBIC}).

For the estimates in Table \ref{table:hyper} the \emph{a posteriori} distribution was approximated using the NUTS variant of Hamiltonian Monte Carlo \citep{hoffman2014no} where the first 1000 steps were omitted and $20,000$ samples were collected.  Each $\hat \lambda(\mathcal D_n)$ estimate in Algorithm \ref{alg:thm4} was performed by linear regression on the pairs $\{ (1/\beta_i, \mathbb{E}^{\beta_i}_w[ nL_n(w) ] ) \}_{i=1}^5$ where the five inverse temperatures $\beta_i$ are centered on the inverse temperature $1/\log(20000)$.

\begin{algorithm}[tb]
	\caption{RLCT via Theorem 4  in \citet{watanabe_widely_2013}}
	\label{alg:thm4}
	\begin{algorithmic}
		\STATE {\bfseries Input:} range of $\beta$'s, set of training sets $\mathcal T$ each of size $n$, approximate samples $\{w_1,\ldots,w_R\}$ from $p^\beta(w|\mathcal D_n)$ for each training set $\mathcal D_n$ and each $\beta$
		\FOR{training set $\mathcal D_n \in \mathcal T$}
		\FOR{$\beta$ in range of $\beta$'s}
		\STATE Approximate ${\E}_w^\beta [nL_n(w)]$ with $\frac{1}{R} \sum_{i=1}^R nL_n(w_r)$ where $w_1,\ldots,w_R$ are approximate samples from $p^\beta(w|\mathcal D_n)$
		\ENDFOR
		\STATE Perform generalised least squares to fit $\lambda$ in (\ref{eq:Theorem4_WBIC}), call result $\hat \lambda(\mathcal D_n)$
		\ENDFOR
		\STATE {\bfseries Output:} $\frac{1}{|\mathcal T|} \sum_{\mathcal D_n \in \mathcal T} \hat \lambda(\mathcal D_n)$
	\end{algorithmic}
\end{algorithm}

\subsection{Connection between RLCT and generalisation} \label{appendix:generalisation_theory}
For completeness, we sketch the derivation of (\ref{eq:Bayes_generalisation}) which gives the asymptotic expansion of the average generalisation error ${\E}_n G(n)$ of the Bayes prediction distribution  in singular models. The exposition is an amalgamation of various works published by Sumio Watanabe, but is mostly based on the textbook \citep{watanabe_algebraic_2009}. 

To understand the connection between the RLCT and $G(n)$, we first define the so-called \textbf{Bayes free energy} as 
\[
F(n) = -\log p(\mathcal D_n)
\]
whose expectation admits the following asymptotic expansion \citep{watanabe_algebraic_2009}:
\[
{\E}_n F(n) =  {\E}_n n S_n + \lambda \log n + o(\log n)
\]
where $S_n = -\frac{1}{n} \sum_{i=1}^n \log q(y_i|x_i)$ is the entropy. 
The expected Bayesian generalisation error is related to the Bayes free energy as follows
\[
{\E}_n G(n) = \E F(n+1) - \E F(n)
\]
Then for the average generalisation error, we have
\begin{equation}
	{\E}_n G(n) = \lambda/n + o(1/n).
	\label{eq:bayesgenerr}
\end{equation}
Since models with more complex singularities have smaller RLCTs, this would suggest that the more singular a model is, the better its generalisation (assuming one uses the Bayesian predictive distribution for prediction). In this connection it is interesting to note that simpler (relative to the model) true distributions lead to more singular models (Section \ref{section:simple_func}).

\subsection{Details for generalisation error experiments}
\label{appendix:generalizaton}

\paragraph{Simulated data.}
The distribution of $x \in \mathbb R^3$ is set to $q(x)=N(0,I_3)$. 
In the realisable case, $y \in \mathbb R^3$ is drawn according to $q(y|x) = p(y|x,\theta_0)$. In the nonrealisable setting, we set $q(y|x) \propto \exp\{-|| y - h_{w_0}(x) ||^2/2\},$ where $w_0 = (A_0,B_0)$ is drawn according to the PyTorch model initialisation of $h$.


\paragraph{MAP training.}
The MAP estimator is found via gradient descent using the mean-squared-error loss with either the full data set or minibatch set to 32. Training was set to 5000 epochs. No form of early stopping was employed.

\paragraph{Calculating the generalisation error.}
Using a held-out-test set $T_{n'} = \{(x_i',y_i')\}_{i=1}^{n'}$, we calculate the average generalisation error as
\begin{equation}
	\frac{1}{n'} \sum_{i=1}^{n'} \log q(y_i'|x_i') - {\E}_n \frac{1}{n'} \sum_{i=1}^{n'} \log \hat q_n(y_i'|x_i')
	\label{eq:computed_avgGn}
\end{equation}
Assume the held-out test set is large enough so that the difference between ${\E}_n G(n)$ and (\ref{eq:computed_avgGn}) is negligible. We will refer to them interchangeably as the average generalisation error. In our experiments we use $n' = 10,000$ and $30$ draws of the dataset $\mathcal{D}_n$ to estimate ${\E}_n$.

\paragraph{Last layer(s) inference.}
Without loss of generality, we discuss performing inference in the $w$ parameters of $h$ while freezing the parameters of $g$ at the MAP estimate. The steps easily extend to performing inference over the final layer only of $f = h \circ g$. Let $\tilde x_i = g_{v_{\operatorname{MAP}}}(x_i)$. Define a new transformed dataset $\tilde{\mathcal D}_n = \{(\tilde x_i, y_i) \}_{i=1}^n$. We take the prior on $w$ to be standard Gaussian. 
Define the posterior over $w$ given $\tilde{\mathcal D}_n$ as:
\begin{equation}
	p(w | \tilde{\mathcal D}_n) \propto p(\tilde{\mathcal D}_n | w) \varphi(w) = \Pi_{i=1}^n \exp\{-|| y_i - h_w(\tilde x_i) ||^2/2\} \varphi(w)
	\label{eq:last_layer_posterior}
\end{equation}
Define the following approximation to the Bayesian predictive distribution
$$
\tilde p(y|x, \mathcal D_n) = \int p(y|x,(v_{\operatorname{MAP}},w)) p(w|\tilde{\mathcal D}_n) \,dw.
$$
Let $w_1,\ldots,w_R$ be some approximate samples from $p(w | \tilde{\mathcal D}_n)$. Then we approximate $\tilde p(y|x, \mathcal D_n)$ with
\[
\frac{1}{R} \sum_{r=1}^R p(y|x,(v_{\operatorname{MAP}},w_r))
\]
where $R$ is a large number, set to 1000 in our experiments. We consider the Laplace approximation and the NUTS variant of HMC for drawing samples from $p(w | \tilde{\mathcal D}_n)$:

\begin{itemize}
	\item \textbf{Laplace in the last layer(s)}
	Recall $\theta_{\operatorname{MAP}} = (v_{\operatorname{MAP}}, w_{\operatorname{MAP}})$ is the MAP estimate for $f_\theta$ trained with the data $\mathcal D_n$. With the Laplace approximation, we draw $w_1,\ldots w_R$ from the Gaussian
	\[
	N(w_{\operatorname{MAP}}, \Sigma)
	\]
	where $\Sigma = (- \nabla^2 \log p(w| \tilde{\mathcal D}_n) |_{w_{\operatorname{MAP}}})^{-1}$ is the inverse Hessian\footnote{Following \citet{kristiadi_being_2020}, the code for the exact Hessian calculation is borrowed from \url{https://github.com/f-dangel/hbp}} of the negative log posterior evaluated at the MAP estimate of the mode.
	\item \textbf{MCMC in the last layer(s)}
	We used the NUTS variant of HMC to draw samples from (\ref{eq:last_layer_posterior}) with the first 1000 samples  discarded.. Our implementation used the \texttt{pyro} package in \texttt{PyTorch}.
	
\end{itemize}

\newpage
\begin{figure}[t!]
	\begin{center}
		\includegraphics[scale=0.45]{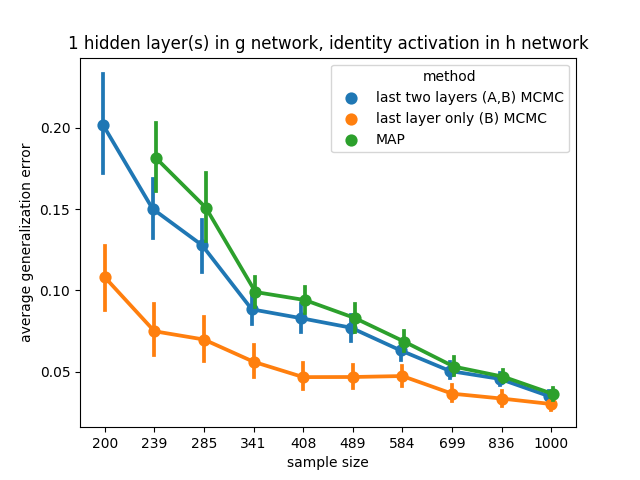}
		\includegraphics[scale=0.45]{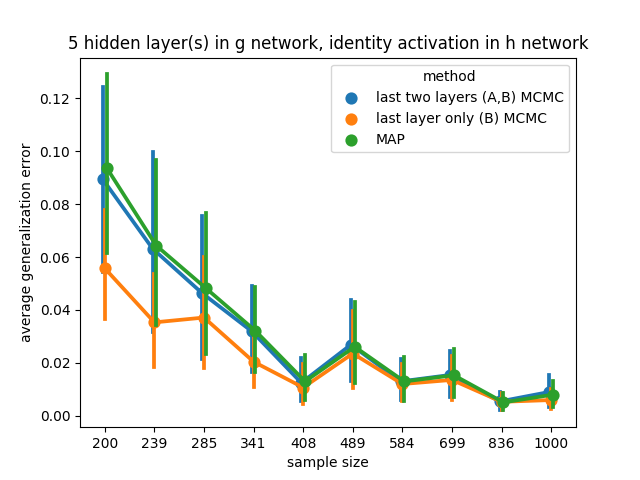}
		\includegraphics[scale=0.45]{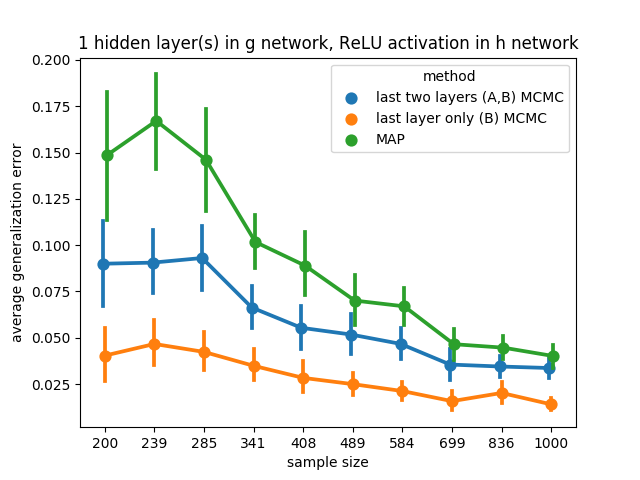}
		\includegraphics[scale=0.45]{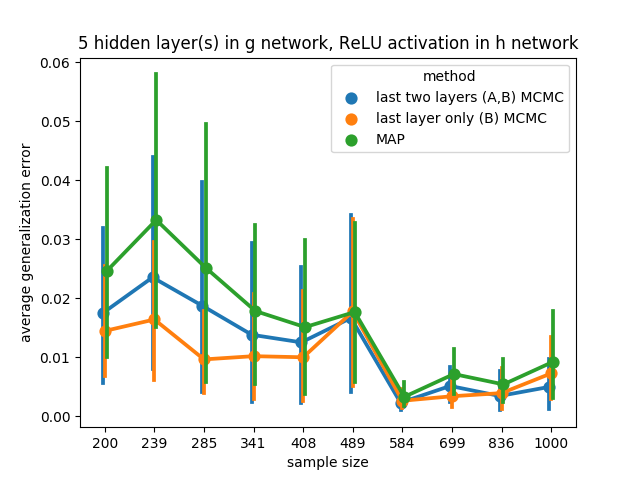}
	\end{center}
	\caption{\textit{Realisable and minibatch gradient descent for MAP training.}}
	\label{fig:avg_gen_err_minibatch_realisable}
\end{figure}

\begin{table}[h!]%
	\caption{Companion to Figure \ref{fig:avg_gen_err_minibatch_realisable}.}%
	\label{table::avg_gen_err_minibatch_realisable}%
	\begin{tiny}
		\begin{subtable}[t]{.5\linewidth}
			\caption{1 hidden layer(s) in $g$, identity activation in $h$}
			\begin{tabular}{lrr}
\toprule
                        method &  learning coefficient &  R squared \\
\midrule
    last two layers (A,B) MCMC &             36.721594 &   0.992839 \\
      last layer only (B) MCMC &             20.676920 &   0.983695 \\
 last two layers (A,B) Laplace &                   inf &        NaN \\
   last layer only (B) Laplace &           1768.655088 &   0.838035 \\
                           MAP &                   inf &        NaN \\
\bottomrule
\end{tabular}

		\end{subtable}
		\hspace{2em}
		\begin{subtable}[t]{.5\linewidth}
			\caption{5 hidden layer(s) in $g$, identity activation in $h$}
			\begin{tabular}{lrr}
\toprule
                        method &  learning coefficient &  R squared \\
\midrule
    last two layers (A,B) MCMC &             13.729278 &   0.924049 \\
      last layer only (B) MCMC &              9.170642 &   0.945613 \\
 last two layers (A,B) Laplace &                   inf &        NaN \\
   last layer only (B) Laplace &           1943.793236 &   0.794679 \\
                           MAP &             14.123308 &   0.917502 \\
\bottomrule
\end{tabular}

		\end{subtable}
		\begin{subtable}[t]{.5\linewidth}
			\caption{1 hidden layer(s) in $g$, ReLU activation in $h$}
			\begin{tabular}{lrr}
\toprule
                        method &  learning coefficient &  R squared \\
\midrule
    last two layers (A,B) MCMC &             22.175448 &   0.975450 \\
      last layer only (B) MCMC &             10.675455 &   0.968584 \\
 last two layers (A,B) Laplace &                   inf &        NaN \\
   last layer only (B) Laplace &                   inf &        NaN \\
                           MAP &             35.647464 &   0.983284 \\
\bottomrule
\end{tabular}

		\end{subtable}
		\hspace{2em}
		\begin{subtable}[t]{.5\linewidth}
			\caption{5 hidden layer(s) in $g$, ReLU activation in $h$}
			\begin{tabular}{lrr}
\toprule
                        method &  learning coefficient &  R squared \\
\midrule
    last two layers (A,B) MCMC &              4.652483 &   0.922693 \\
      last layer only (B) MCMC &              3.533366 &   0.862125 \\
 last two layers (A,B) Laplace &                   inf &        NaN \\
   last layer only (B) Laplace &           1004.852367 &   0.901899 \\
                           MAP &              6.256696 &   0.940437 \\
\bottomrule
\end{tabular}

		\end{subtable}
	\end{tiny}
\end{table}

\newpage

\begin{figure}[t!]
	\begin{center}
		\includegraphics[scale=0.45]{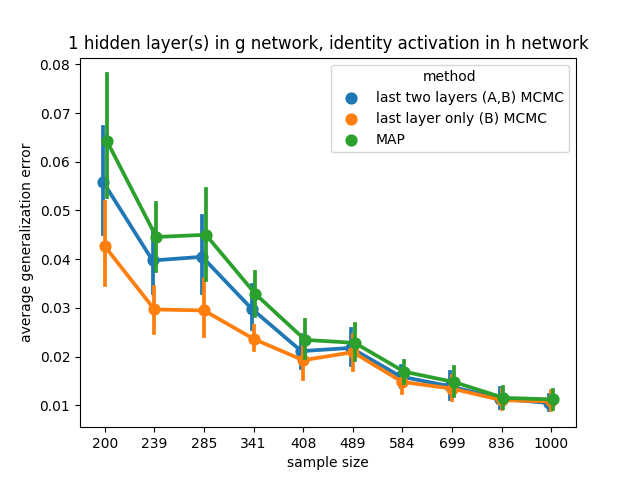}
		\includegraphics[scale=0.45]{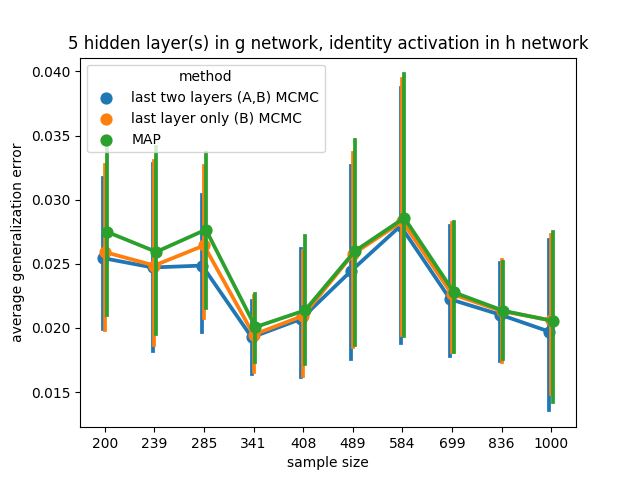}
		\includegraphics[scale=0.45]{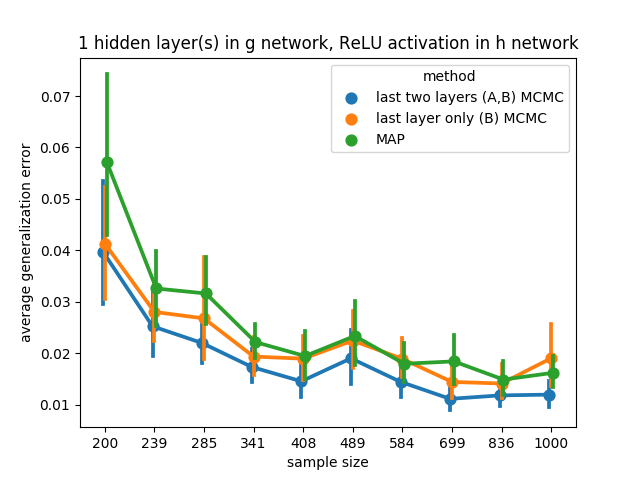}
		\includegraphics[scale=0.45]{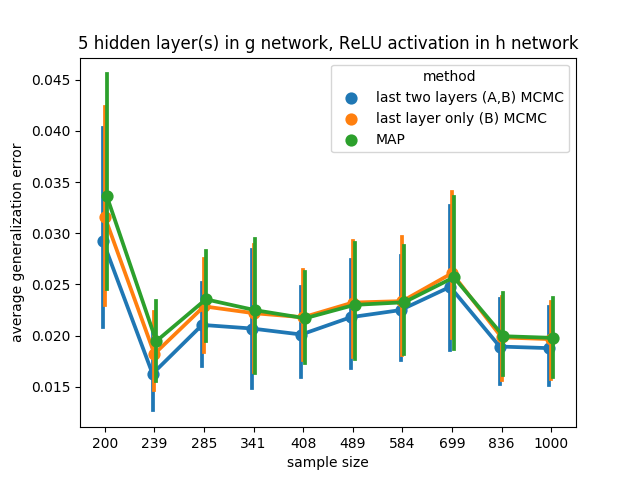}
	\end{center}
	\caption{\textit{Nonrealisable and full batch gradient descent for MAP training.}}
	\label{fig:avg_gen_err_fullbatch_nonrealisable}
\end{figure}

\begin{table}[h!]%
	\caption{Companion to Figure \ref{fig:avg_gen_err_fullbatch_nonrealisable}. The learning coefficient is the slope of the linear fit $1/n$ versus $\E_n G(n)$ (with intercept since nonrealisable).}%
	\label{table::avg_gen_err_fullbatch_nonrealisable}%
	\begin{tiny}
		\begin{subtable}[t]{.5\linewidth}
			\caption{1 hidden layer(s) in $g$, identity activation in $h$}
			\begin{tabular}{lrr}
\toprule
                        method &  learning coefficient &  R squared \\
\midrule
    last two layers (A,B) MCMC &             11.086023 &   0.969991 \\
      last layer only (B) MCMC &              7.377871 &   0.957824 \\
 last two layers (A,B) Laplace &                   NaN &        NaN \\
   last layer only (B) Laplace &             30.692954 &   0.029238 \\
                           MAP &             12.947959 &   0.970173 \\
\bottomrule
\end{tabular}

		\end{subtable}
	    \hspace{2em}
		\begin{subtable}[t]{.5\linewidth}
			\caption{5 hidden layer(s) in $g$, identity activation in $h$}		\begin{tabular}{lrr}
\toprule
                        method &  learning coefficient &  R squared \\
\midrule
    last two layers (A,B) MCMC &              0.808601 &   0.144260 \\
      last layer only (B) MCMC &              0.799114 &   0.127686 \\
 last two layers (A,B) Laplace &                   NaN &        NaN \\
   last layer only (B) Laplace &            -33.817429 &   0.009074 \\
                           MAP &              1.204743 &   0.242671 \\
\bottomrule
\end{tabular}

		\end{subtable}
	
		\begin{subtable}[t]{.5\linewidth}
			\caption{1 hidden layer(s) in $g$, ReLU activation in $h$}
			\begin{tabular}{lrr}
\toprule
                        method &  learning coefficient &  R squared \\
\midrule
    last two layers (A,B) MCMC &              5.987187 &   0.848490 \\
      last layer only (B) MCMC &              5.384686 &   0.801313 \\
 last two layers (A,B) Laplace &                   NaN &        NaN \\
   last layer only (B) Laplace &             38.629167 &   0.059012 \\
                           MAP &              8.560722 &   0.816794 \\
\bottomrule
\end{tabular}

		\end{subtable}
	    \hspace{2em}
		\begin{subtable}[t]{.5\linewidth}
			\caption{5 hidden layer(s) in $g$, ReLU activation in $h$}		\begin{tabular}{lrr}
\toprule
                        method &  learning coefficient &  R squared \\
\midrule
    last two layers (A,B) MCMC &              0.794055 &   0.088305 \\
      last layer only (B) MCMC &              1.141580 &   0.162585 \\
 last two layers (A,B) Laplace &                   NaN &        NaN \\
   last layer only (B) Laplace &             -5.682602 &   0.000365 \\
                           MAP &              1.648073 &   0.284088 \\
\bottomrule
\end{tabular}

		\end{subtable}
	\end{tiny}
\end{table}

\newpage
\begin{figure}[t!]
	\begin{center}
		\includegraphics[scale=0.45]{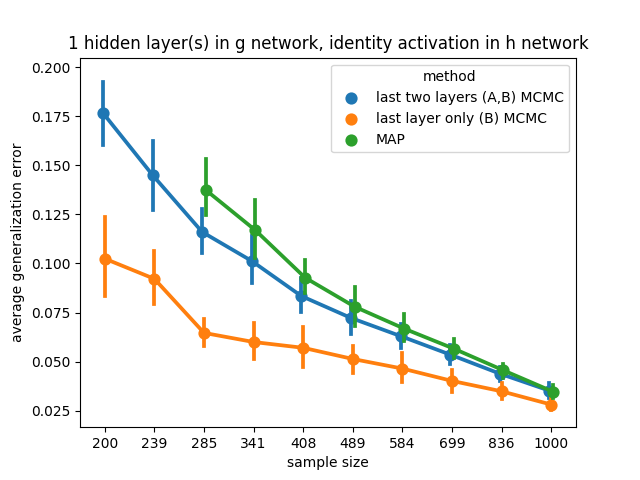}
		\includegraphics[scale=0.45]{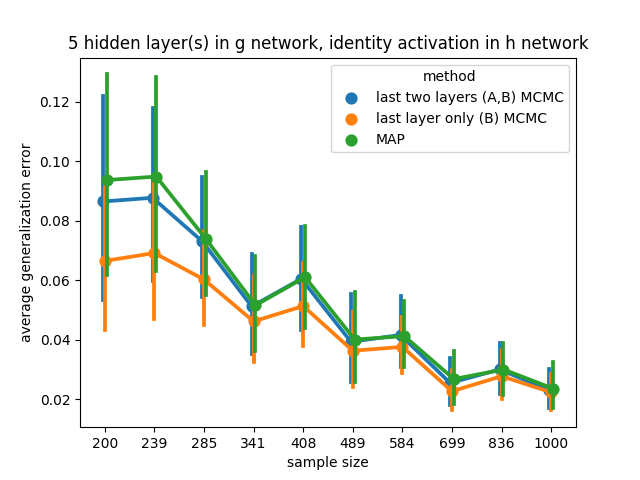}
		\includegraphics[scale=0.45]{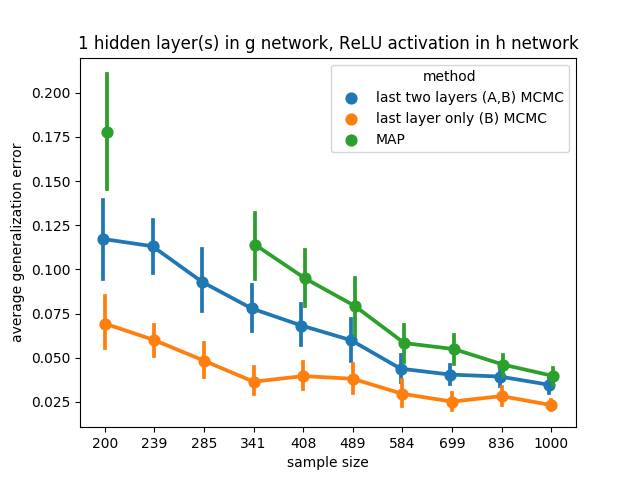}
		\includegraphics[scale=0.45]{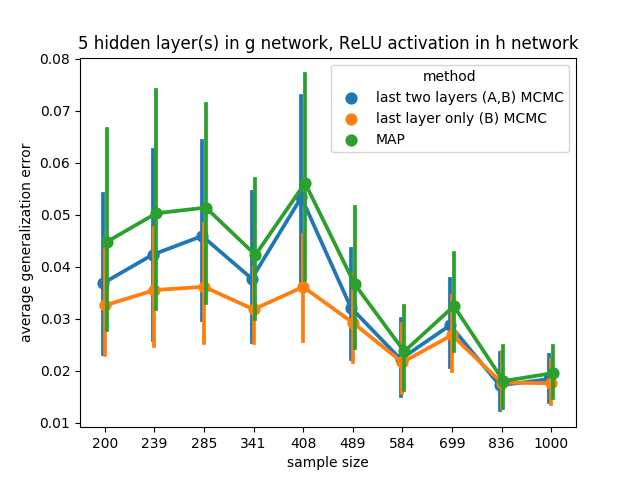}
	\end{center}
	\caption{\textit{Nonrealisable and minibatch gradient descent for MAP training.} Missing points on the MAP learning curve are due to estimated probabilities too close to 0.}
	\label{fig:avg_gen_err_minibatch_nonrealisable}
\end{figure}

\begin{table}[h!]%

	\caption{Companion to Figure \ref{fig:avg_gen_err_minibatch_nonrealisable}. The learning coefficient is the slope of the linear fit $1/n$ versus $\E_n G(n)$ (with intercept since nonrealisable).}%
	\label{table::avg_gen_err_minibatch_nonrealisable}%
	\begin{tiny}
		\begin{subtable}[t]{.5\linewidth}
			\caption{1 hidden layer(s) in $g$, identity activation in $h$}		
		\end{subtable}
		\hspace{2em}
		\begin{subtable}[t]{.5\linewidth}
			\caption{5 hidden layer(s) in $g$, identity activation in $h$}		
		\end{subtable}

		\begin{subtable}[t]{.5\linewidth}
			\caption{1 hidden layer(s) in $g$, ReLU activation in $h$}		
		\end{subtable}
		\hspace{2em}
		\begin{subtable}[t]{.5\linewidth}
			\caption{5 hidden layer(s) in $g$, ReLU activation in $h$}		
		\end{subtable}
	\end{tiny}
\end{table}

%

\begin{figure}[h!]
	\begin{center}
		\includegraphics[scale=0.45]{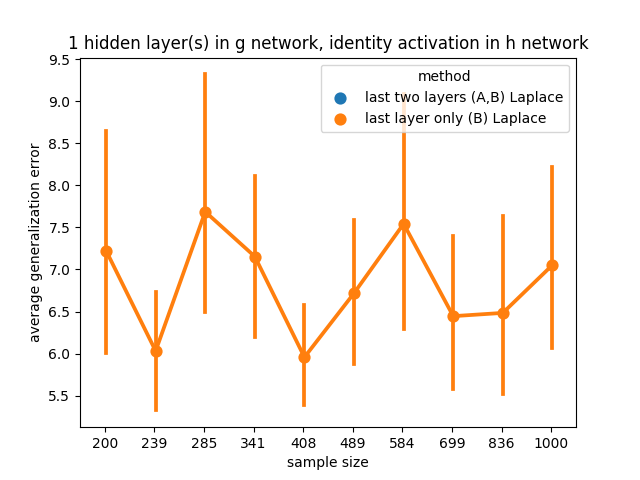}
		\includegraphics[scale=0.45]{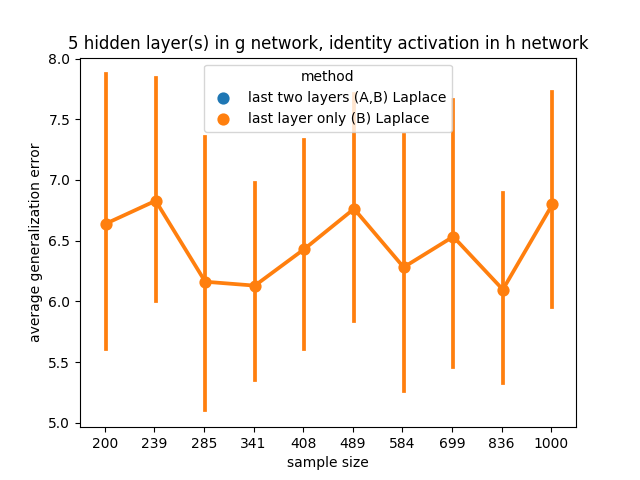}
		\includegraphics[scale=0.45]{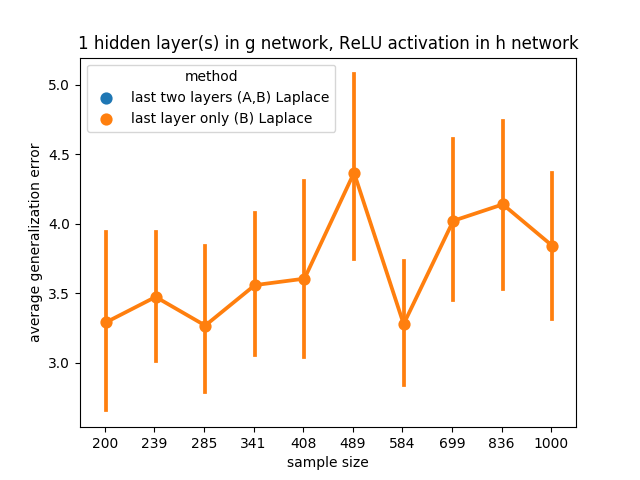}
		\includegraphics[scale=0.45]{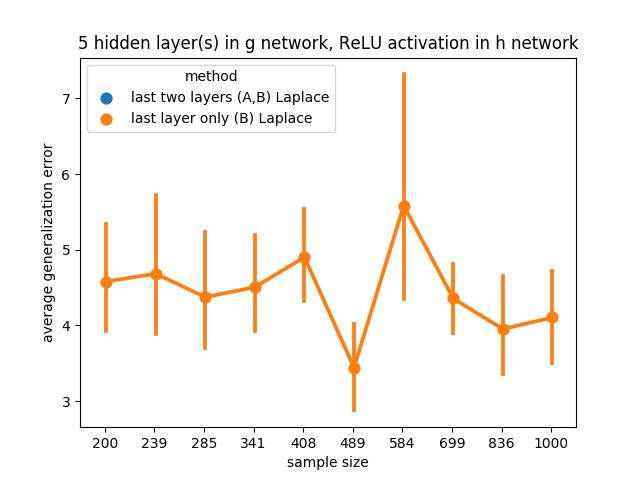}
	\end{center}
	\caption{\textit{Realisable and full batch gradient descent for MAP}. average generalisation errors of Laplace approximations of the predictive distribution. The last-two-layers Laplace approximation results in numerical instabilities due to degenerate Hessian. Any missing points are due to estimated probabilities too close to 0. 
	}
	\label{fig:avg_gen_err_fullbatch_realisable_laplace}
\end{figure}

\begin{figure}[t!]
	\begin{center}
		\includegraphics[scale=0.45]{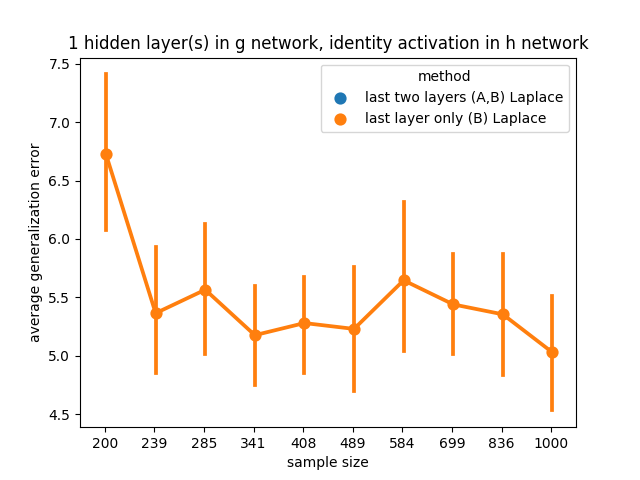}
		\includegraphics[scale=0.45]{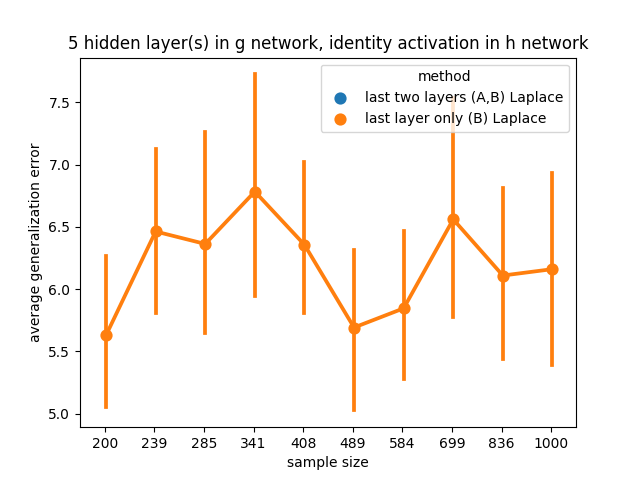}
		\includegraphics[scale=0.45]{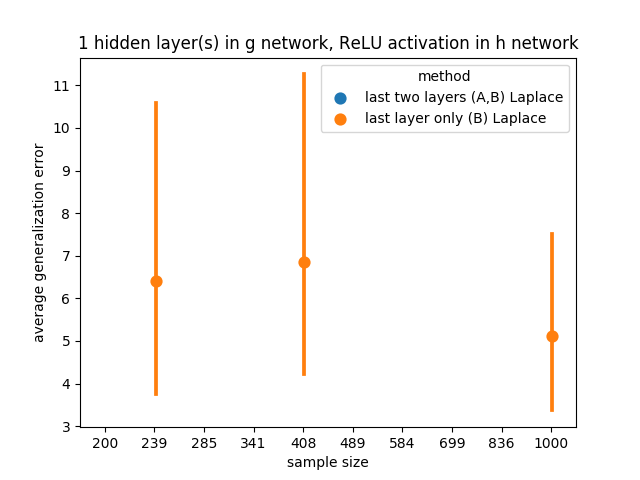}
		\includegraphics[scale=0.45]{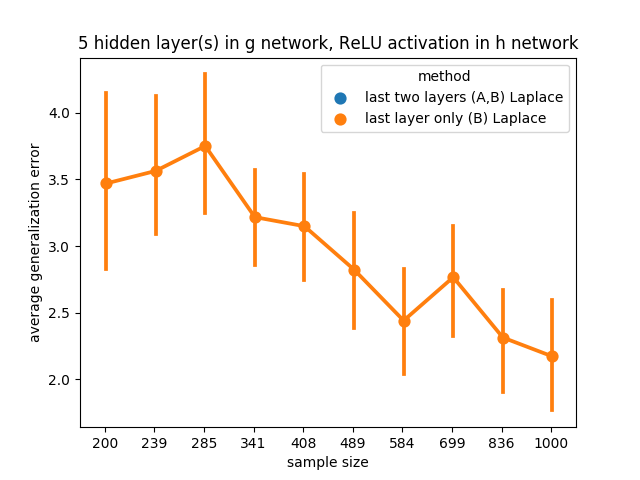}
	\end{center}
	\caption{\textit{Realisable and minibatch gradient descent for MAP training}. Details are same as for Figure \ref{fig:avg_gen_err_fullbatch_realisable_laplace}}
	\label{fig:avg_gen_err_minibatch_realisable_laplace}
\end{figure}

\begin{figure}[t!]
	\begin{center}
		\includegraphics[scale=0.45]{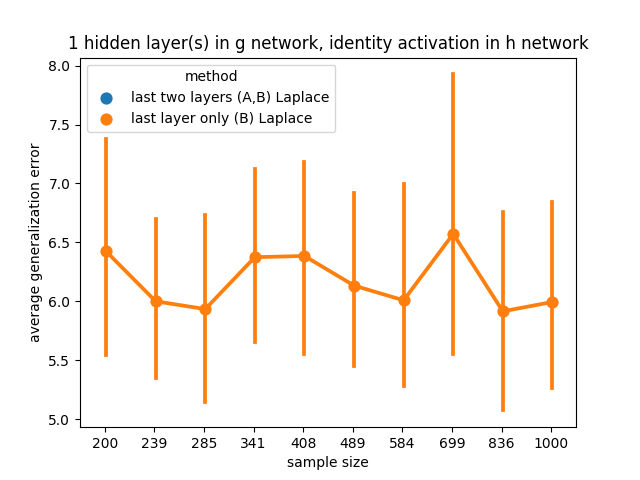}
		\includegraphics[scale=0.45]{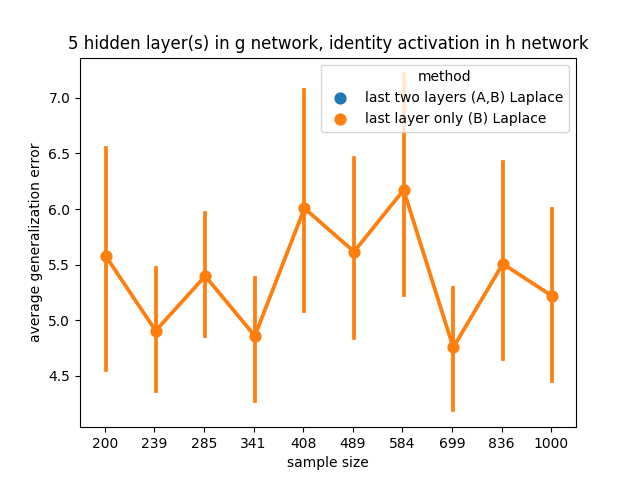}
		\includegraphics[scale=0.45]{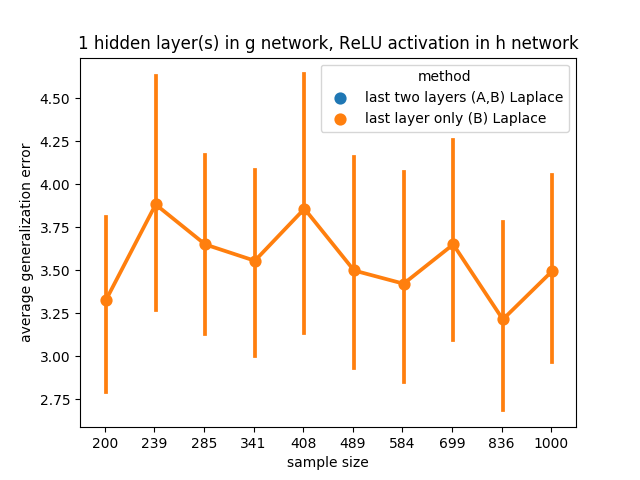}
		\includegraphics[scale=0.45]{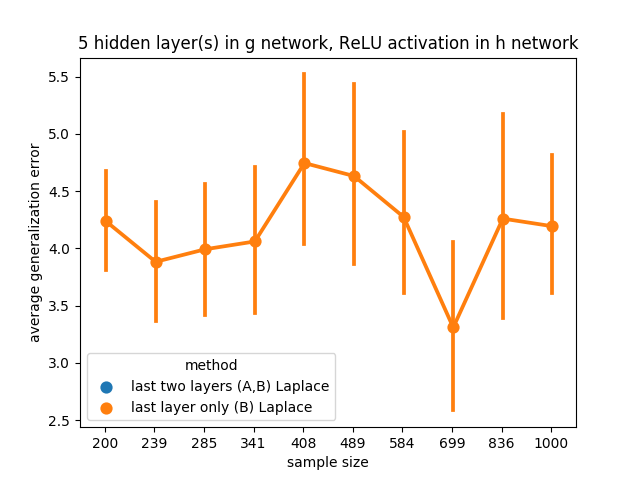}
	\end{center}
	\caption{\textit{Nonrealisable and full batch gradient descent for MAP training.} Details are same as for Figure \ref{fig:avg_gen_err_fullbatch_realisable_laplace}
	}
	\label{fig:avg_gen_err_fullbatch_nonrealisable_laplace}
\end{figure}

\begin{figure}[t!]
	\begin{center}
		\includegraphics[scale=0.45]{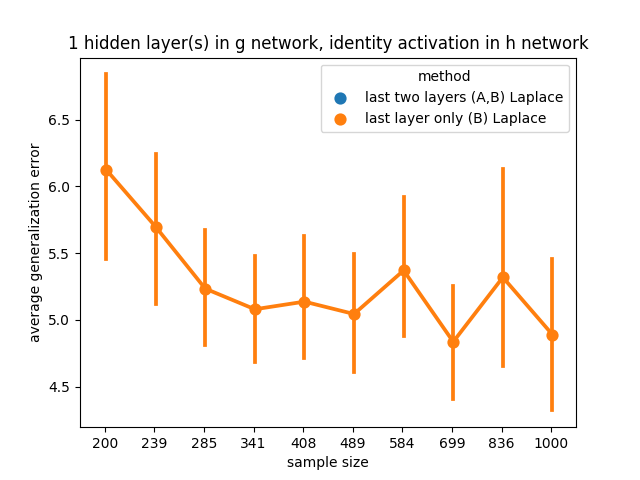}
		\includegraphics[scale=0.45]{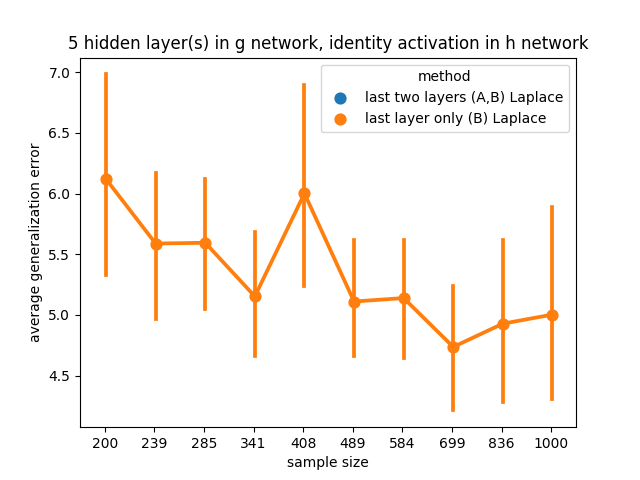}
		\includegraphics[scale=0.45]{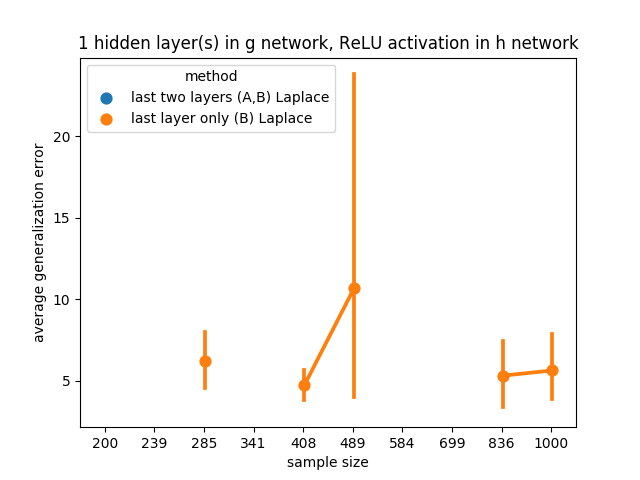}
		\includegraphics[scale=0.45]{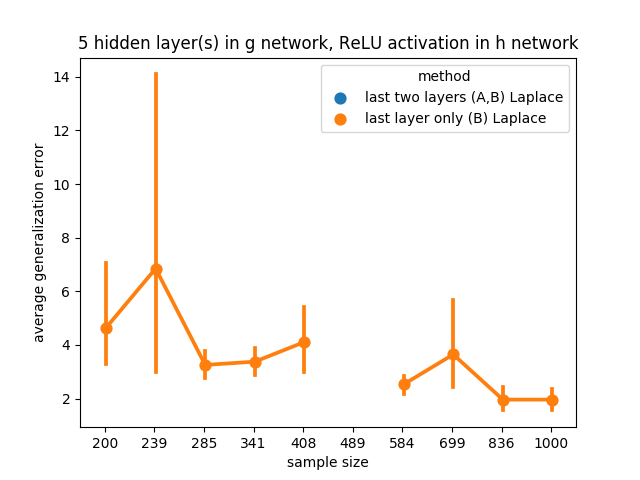}
	\end{center}
	\caption{\textit{Nonrealisable and minibatch gradient descent for MAP training.} Details are same as for Figure \ref{fig:avg_gen_err_fullbatch_realisable_laplace}
	}
	\label{fig:avg_gen_err_minibatch_nonrealisable_laplace}
\end{figure}

\end{document}